
\documentclass[english]{IEEEtran}
\usepackage[T1]{fontenc}
\usepackage[latin9]{inputenc}
\usepackage{float}
\usepackage{graphicx}

\makeatletter

\providecommand{\tabularnewline}{\\}
\floatstyle{ruled}
\newfloat{algorithm}{tbp}{loa}
\providecommand{\algorithmname}{Algorithm}
\floatname{algorithm}{\protect\algorithmname}

\newcommand{\lyxaddress}[1]{
\par {\raggedright #1
\vspace{1.4em}
\noindent\par}
}

\newtheorem{theorem}{Theorem}
\newtheorem{proof}{Proof}
\newtheorem{proposition}{Proposition}
\newtheorem{definition}{Definition}
\newtheorem{corollary}{Corollary}

\makeatother

\usepackage{babel}
\begin{document}

\title{IDS: An Incremental Learning Algorithm for Finite Automata}

\author{Muddassar A. Sindhu, Karl Meinke}

\maketitle

\lyxaddress{School of Computer Science and Communication, Royal Institute of
Technology, 10044 Stockholm, Sweden.}
\begin{abstract}
We present a new algorithm IDS for incremental learning of deterministic
finite automata (DFA). This algorithm is based on the concept of distinguishing
sequences introduced in \cite{Angluin81}. We give a rigorous proof
that two versions of this learning algorithm correctly learn in the
limit. Finally we present an empirical performance analysis that compares
these two algorithms, focussing on learning times and different types
of learning queries. We conclude that IDS is an efficient algorithm
for software engineering applications of automata learning, such as
formal software testing and model inference.\end{abstract}
\begin{IEEEkeywords}
Online learning, model inference, incremental learning, learning in
the limit, language inference.
\end{IEEEkeywords}

\section{\label{sec:Introduction}Introduction}

In recent years, automata learning algorithms (aka. regular inference
algorithms) have found new applications in software engineering such
as \emph{formal verification} (e.g. \cite{Peled},\cite{Clarke},
\cite{Luecker2006} ) \emph{software testing} (e.g. \cite{Raffelt},\cite{MeinkeSindhu2011}
) and \emph{model inference} (e.g. \cite{Bohlin}). These applications
mostly centre around learning an abstraction of a complex software
system which can then be statically analysed (e.g. by model checking)
to determine behavioural correctness. Many of these applications can
be improved by the use of learning procedures that are \emph{incremental}.

An automata learning algorithm is incremental if: (i) it constructs
a sequence of hypothesis automata $H_{0},H_{1},...$ from a sequence
of observations $o_{0},o_{1},...$ about an unknown target automaton
$A$, and this sequence of hypothesis automata finitely converges
to $A$, and (ii) the construction of hypothesis $H_{i}$ can reuse
aspects of the construction of the previous hypothesis $H_{i-1}$
(such as an equivalence relation on states). The notion of convergence
in the limit, as a model of correct incremental learning originates
in \cite{Gold67}.

Generally speaking, much of the literature on automata learning has focussed on offline learning from a fixed pre-existing data set describing the target automaton. Other approaches, such as \cite{Angluin81} and \cite{Angluin87} have considered online learning, where the data set can be extended by constructing and posing new queries. However, little attention has been paid to incremental learning algorithms, which can be seen as a subclass of online algorithms where serial hypothesis construction using a sequence of increasing data sets is emphasized. The much smaller collection of known incremental algorithms includes the RPNI2 algorithm of \cite{Dupont96}, the IID algorithm of \cite{Parekh} and the algorithm of \cite{PoratFeldman1991}. However, the motivation for incremental learning from a software engineering perspective is strong, and can be summarised as follows: \begin{enumerate} \item to analyse a large software system it may not be feasible (or even necessary) to learn the entire automaton model, and \item the choice of each relevant observation $o_{i}$ about a large unknown software system often needs to be iteratively guided by analysis of the previous hypothesis model $H_{i-1}$ for efficiency reasons. \end{enumerate}

Our research into efficient \emph{learning-based testing} (LBT) for software systems (see e.g. \cite{Meinke04}, \cite{MeinkeNiu}, \cite{MeinkeSindhu2011}) has led us to investigate the use of distinguishing sequences to design incremental learning algorithms for DFA. Distinguishing sequences offer a rather minimal and flexible way to construct a state space partition, and hence a quotient automaton that represents a hypothesis \emph{$H$} about the target DFA to be learned. Distinguishing sequences were first applied to derive the ID online learning algorithm for DFA in \cite{Angluin81}.

In this paper, we present a new algorithm\emph{ incremental distinguishing sequences} (\emph{IDS}), which uses the distinguishing sequence technique for incremental learning of DFA. In \cite{MeinkeSindhu2011} this algorithm has been successfully applied to learning based testing of reactive systems with demonstrated error discovery rates up to 4000 times faster than using non-incremental learning. Since little seems to have been published about the empirical performance of incremental learning algorithms, we consider this question too.
The structure of the paper is as follows. In Section \ref{prel}, we review some essential mathematical preliminaries, including a presentation of Angluin\textquoteright{}s original ID algorithm, which is necessary to understand the correctness proof for \emph{IDS}. In Section \ref{corr}, we present two different versions of the \emph{IDS} algorithm and prove their correctness. These are called: (1)\emph{ prefix free IDS}, and (2) \emph{prefix closed IDS}. In Section \ref{exper}, we compare the empirical performance of our two \emph{IDS} algorithms with each other. Finally, in Section \ref{concl}, we present some conclusions and discuss future directions for research.
\subsection{Related Work}
Distinguishing sequences were first applied to derive the ID online learning algorithm for DFA in \cite{Angluin81}. The ID algorithm is not incremental, since only a single hypothesis automaton is ever produced. Later an incremental version IID of this algorithm was presented in \cite{Parekh}. Like the IID algorithm, our \emph{IDS} algorithm is incremental. However in contrast with IID, the \emph{IDS} algorithm, and its proof of correctness are much simpler, and some technical errors in \cite{Parekh} are also overcome. 
Distinguishing sequences can be contrasted with the complete consistent table approach to partition construction as represented by the well known online learning algorithm L{*} of \cite{Angluin87}. Unlike L{*}, distinguishing sequences dispose of the need for an equivalence oracle during learning. Instead, we can assume that the observation set \emph{$P$} contains a \emph{live complete} set of input strings (see Section \ref{IDalgsec} below for a technical definition). Furthermore, unlike L{*} distinguishing sequences do not require a complete table of queries before building the partition relation. In the context of software testing, both of these differences result in a much more efficient learning algorithm. In particular there is greater scope for using online queries that have been generated by other means (such as model checking). Moreover, since LBT is a black-box approach to software testing, then the use of an equivalence oracle contradicts the black-box methodology.
In \cite{Dupont96}, an incremental version RPNI2 of the RPNI offline learning algorithm of \cite{OncingaGarcia1991} and \cite{Lang1992} is presented. The RPNI2 algorithm is much more complex than \emph{IDS}. It includes a recursive depth first search of a lexicographically ordered state set with backtracking, and computation of a non-deterministic hypothesis automaton that is subsequently rendered deterministic. These operations have no counterpart in \emph{IDS}. Thus \emph{IDS} is easier to verify and can be quickly and easily implemented in practise. The incremental learning algorithm introduced in \cite{PoratFeldman1991} requires a lexicographic ordering on the presentation of online queries, which is less flexible than \emph{IDS}, and indeed inappropriate for software engineering applications.
\section{Preliminaries}\label{prel}
\subsection{Notations and concepts for DFA}
Let $\Sigma$ be any set of symbols then $\Sigma^{*}$ denotes the set of all finite strings over $\Sigma$ including the empty string $\lambda$. The length of a string $\alpha\in\Sigma^{*}$ is denoted by $\vert\alpha\vert$ and $\vert\lambda\vert=0$. For strings $\alpha,\beta\in\Sigma^{*}$ , $\alpha\beta$ denotes their concatenation.
For $\alpha,\beta,\gamma\in\Sigma^{*}$, if $\alpha=\beta\gamma$ then $\beta$ is termed a \emph{prefix }of $\alpha$ and $\gamma$ is termed a \emph{suffix }of $\alpha$. We let $Pref(\alpha)$ denote the prefix closure of $\alpha$, i.e. the set of all prefixes of $\alpha$. We can also apply prefix closure pointwise to any set of strings. The \emph{set difference operation }between two sets \emph{U }and \emph{V }denoted by $U-V$ is the set of elements of \emph{U }which are not members of \emph{V. }The \emph{symmetric difference }operation defined on pairs of sets is defined by $U\oplus V=(U-V)\cup(V-U)$.
A \emph{deterministic finite automaton }(DFA) is a quintuple $A=\langle\Sigma,Q,F,q_{0},\delta\rangle$, where: $\Sigma$ is the input alphabet, $Q$ is the state set, $F\subseteq Q$ is the set of final states, $q_{0}\in Q$ is the initial state and state transition function $\delta$ is a mapping $\delta:Q\times\Sigma\rightarrow Q$, and $\delta(q_{i},b)=q_{j}$ meaning when in state $q_{i}\in Q$ given input $b$ the automaton $A$ will move to state $q_{j}\in Q$ in one step. We extend the function $\delta$ to a mapping $\delta^{*}=Q\times\Sigma^{*}\rightarrow Q$ inductively defined by $\delta=(q,\lambda)=q$ and $\delta^{*}=(q_{i},b_{1},...,b_{n})=\delta(\delta^{*}(q,b_{1},...,b_{n-1}),b_{n})$. The \emph{language }$L(A)$ accepted by $A$ is the set of all strings $\alpha\in\Sigma^{*}$ such that $\delta^{*}(q_{0},\alpha)\in F$. As is well known a language $L\subseteq\Sigma^{*}$is accepted by DFA if and only if $L$ is \emph{regular}, i.e. $L$ can be defined by a regular grammar. A state $q\in Q$ is said to be \emph{live} if for some string $\alpha\in\Sigma^{*}$ , $\delta^{*}(q,\alpha)\in F$ , otherwise $q$ is said to be \emph{dead. }Given a distinguished\emph{ dead state $d_{0}$ }we define\emph{ string concatenation modulo the dead state $d_{0}$, $f:\Sigma^{*}\cup\{d_{0}\}\times\Sigma\rightarrow\Sigma^{*}\cup\{d_{0}\}$, }by $f(d_{0},\sigma)=d_{0}$ and $f(\alpha,\sigma)=\alpha.\sigma$ for $\alpha\in\Sigma^{*}$. This is function is used for automaton learning in Section \ref{corr}. Given any DFA $A$ there exists a minimum state DFA $A^{\prime}$ such that $L(A)=L(A^{\prime})$ and this automaton is termed the \emph{canonical} DFA for $L(A)$. A canonical DFA has one dead state at the most.
We represent DFA graphically in the usual way using state diagrams. States are represented by small circles labelled by state names and final states among them are marked by concentric double circles. The initial state is represented by attaching a right arrow $\rightarrow$ to it. The transitions between the states are represented by directed arrows from state of origin to the destination state. The symbol read from the origin state is attached to the directed arrow as a label. Fig \ref{fig:Target} shows state transition diagram of one such DFA.
\subsection{The \emph{ID }Algorithm}\label{IDalgsec}
Our \emph{IDS }algorithm is an incremental version of the \emph{ID }learning algorithm introduced in \cite{Angluin81}. The ID algorithm is an online learning algorithm for complete learning of a DFA that starts from a given live complete set $P\subseteq\Sigma^{*}$ of queries about the target automaton, and generates new queries until a state space partition can be constructed. Since the algorithmic ideas and proof of correctness of IDS are based upon those of ID itself, it is useful to review the ID algorithm here. Algorithm 1 presents the ID algorithm. Since this algorithm has been discussed at length in \cite{Angluin81}, our own presentation can be brief. A detailed proof of correctness of ID and an analysis of its complexity can be found in \cite{Angluin81}.
A finite set $P\subseteq\Sigma^{*}$ of input strings is said to be \textit{live complete} for a DFA $A$ if for every live state $q\in Q$ there exists a string $\alpha\in P$ such that $\delta^{*}(q_{0},\alpha)=q$. Given a live complete set $P$ for a target automaton $A$, the essential idea of the ID algorithm is to first construct the set $T'=P\cup\{f(\alpha,b)\vert(\alpha,b)\in P\times\Sigma\}\cup\{d_{0}\}$ of all one element extensions of strings in $P$ as a set of state names for the hypothesis automaton. The symbol $d_{0}$ is added as a name for the canonical dead state. This set of state names is then iteratively partitioned into sets $E_{i}(\alpha)\subseteq T'$ for $i=0,1,\ldots$ such that elements $\alpha$, $\beta$ of $T'$ that denote the same state in $A$ will occur in the same partition set, i.e. $E_{i}(\alpha)=E_{i}(\beta)$. This partition refinement can be proven to terminate and the resulting collection of sets forms a congruence on $T'$. Finally the ID algorithm constructs the hypothesis automaton as the resulting quotient automaton. The method used to refine the partition set is to iteratively construct a set $V$ of \textit{distinguishing strings}, such that no two distinct states of $A$ have the same behaviour on all of $V$.

\begin{algorithm}
\textbf{Input}: A live complete set $P\subseteq\Sigma^{*}$ and a
teacher DFA $A$ to answer membership queries $\alpha\in L(A)$.

\textbf{Output: }A DFA $M$ equivalent to the target DFA $A$.

\centering{}\begin{enumerate} \item \textbf{\footnotesize begin}{\footnotesize{} }{\footnotesize \par} \item \textbf{\footnotesize /}{\footnotesize /Perform Initialization }{\footnotesize \par}\item {\footnotesize $i=0,$ $v_{i}=\lambda,$ $V=\{v_{i}\}$ }{\footnotesize \par} \item {\footnotesize $P'=P\cup\{d_{0}\}$, $T=P\cup\{f(\alpha,b)\vert(\alpha,b)\in P\times\Sigma\},$ $T'=T\cup\{d_{0}\}$ }{\footnotesize \par}\item {\footnotesize Construct function $E_{0}$ for $v_{0}=\lambda,$ }{\footnotesize \par} \item {\footnotesize $E_{0}(d_{0})=\emptyset$ }{\footnotesize \par} \item {\footnotesize $\forall\alpha\in T$ }{\footnotesize \par} \item {\footnotesize \{ pose the membership query $``\alpha\in L(A)?"$ }{\footnotesize \par} \item \textbf{\footnotesize $\qquad$if}{\footnotesize{} the teacher's response is $yes$ }{\footnotesize \par} \item \textbf{\footnotesize $\qquad$then}{\footnotesize{} $E_{0}(\alpha)=\{\lambda\}$ }{\footnotesize \par} \item \textbf{\footnotesize $\qquad$else}{\footnotesize{} $E_{0}(\alpha)=\emptyset$ }{\footnotesize \par} \item \textbf{\footnotesize $\qquad$end if}{\footnotesize{} }{\footnotesize \par} \item {\footnotesize \} }{\footnotesize \par} \item {\footnotesize //Refine the partition of the set $T'$ }{\footnotesize \par} \item \textbf{\footnotesize while}{\footnotesize{} $(\exists\alpha,\beta\in P^{\prime}$ and $b\in\Sigma$ such that $E_{i}(\alpha)=E_{i}(\beta)$ but $E_{i}(f(\alpha,b))\not\neq E_{i}(f(\beta,b)))$ }{\footnotesize \par} \item \textbf{\footnotesize do}{\footnotesize{} }{\footnotesize \par} \item {\footnotesize $\qquad$Let $\gamma\in E_{i}(f(\alpha,b))\oplus E_{i}(f(\beta,b))$ }{\footnotesize \par} \item {\footnotesize $\qquad$$v_{i+1}=b\gamma$ }{\footnotesize \par} \item {\footnotesize $\qquad$$V=V\cup\{v_{i+1}\}$, $i=i+1$ }{\footnotesize \par} \item {\footnotesize $\qquad$$\forall\alpha\in T_{k}$ pose the membership query $"\alpha v_{i}\in L(A)?"$ }{\footnotesize \par}\item {\footnotesize $\qquad\qquad$\{ }{\footnotesize \par} \item \textbf{\footnotesize $\qquad$$\qquad$if}{\footnotesize{} the teacher's response is $yes$ }{\footnotesize \par} \item \textbf{\footnotesize $\qquad$$\qquad$then}{\footnotesize{} $E_{i}(\alpha)=E_{i-1}(\alpha)\cup\{v_{i}\}$ }{\footnotesize \par} \item \textbf{\footnotesize $\qquad\qquad$else}{\footnotesize{} $E_{i}(\alpha)=E_{i-1}(\alpha)$ }{\footnotesize \par} \item \textbf{\footnotesize $\qquad\qquad$end if}{\footnotesize{} }{\footnotesize \par} \item {\footnotesize $\qquad\qquad$\} }{\footnotesize \par} \item \textbf{\footnotesize end while}{\footnotesize{} }{\footnotesize \par} \item {\footnotesize //Construct the representation $M$ of the target DFA $A$. }{\footnotesize \par} \item {\footnotesize The states of $M$ are the sets $E_{i}(\alpha)$, where $\alpha\in T$ }{\footnotesize \par} \item {\footnotesize The initial state $q_{0}$ is the set $E_{i}(\lambda)$ }{\footnotesize \par} \item {\footnotesize The accepting states are the sets $E_{i}(\alpha)$ where $\alpha\in T$ and $\lambda\in E_{i}(\alpha)$ }{\footnotesize \par} \item {\footnotesize The transitions of $M$ are defined as follows: }{\footnotesize \par} \item {\footnotesize $\quad\forall\alpha\in P^{\prime}$ }{\footnotesize \par} \item \textbf{\footnotesize $\qquad$if}{\footnotesize{} $E_{i}(\alpha)=\emptyset$ }{\footnotesize \par} \item \textbf{\footnotesize $\qquad$then}{\footnotesize{} add self loops on the state $E_{i}(\alpha)$ for all $b\in\Sigma$ }{\footnotesize \par} \item \textbf{\footnotesize $\qquad$else}{\footnotesize{} $\forall b\in\Sigma$ set the transition $\delta(E_{i}(\alpha),b)=E_{i}(f(\alpha,b))$ }{\footnotesize \par} \item \textbf{\footnotesize $\qquad$end if}{\footnotesize{} }{\footnotesize \par} \item \textbf{\footnotesize end.}{\footnotesize{} }{\footnotesize \par}\end{enumerate}\caption{\label{IDalg}\textbf{The ID Learning Algorithm}}
\end{algorithm}
\begin{figure}
\includegraphics[width=1\columnwidth,height=0.16\paperheight]{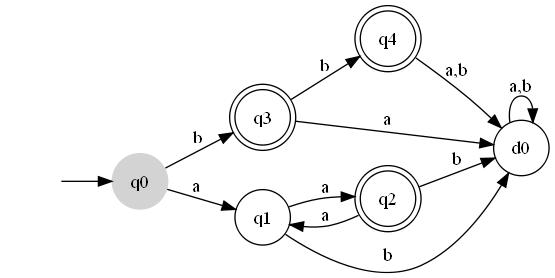}

\noindent \begin{centering}
\caption{\label{fig:Target}Target Automaton $A$}

\par\end{centering}

\end{figure}
\begin{figure}
\begin{centering}
\includegraphics[width=0.7\columnwidth,height=0.12\paperheight]{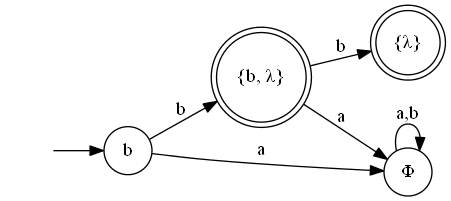}
\par\end{centering}

\caption{\label{fig:Hypothesis}Hypothesis Automaton $M_{1}$}

\end{figure}
\begin{algorithm}
\textbf{\footnotesize Input}{\footnotesize : A file $S=s_{1},\ldots,s_{l}$ of input strings $s_{i}\in\Sigma^{*}$ and a teacher DFA $A$ to answer membership queries $\alpha\in L(A)?$}\\ \textbf{\footnotesize Output:}{\footnotesize{} A sequence of $DFA$ $M_{t}$ for $t=0,\ldots,l$ as well as the total number of membership queries and book keeping queries asked by the learner. }{\footnotesize \par} \begin{enumerate} \item \textbf{\footnotesize begin}{\footnotesize{} }{\footnotesize \par} \item {\footnotesize $\quad$ }\textbf{\footnotesize /}{\footnotesize /Perform Initialization }{\footnotesize \par} \item {\footnotesize $\quad$ $i=0,$ $k=0,$ $t=0,$ $v_{i}=\lambda$, $V=\{v_{i}\}$ }{\footnotesize \par} \item {\footnotesize $\quad$ //Process the empty string }{\footnotesize \par} \item {\footnotesize $\quad$ $P_{0}=\{\lambda\}$, $P'_{0}=P_{0}\cup\{d_{0}\}$, $T_{0}=P_{0}\cup\Sigma$ }{\footnotesize \par} \item {\footnotesize $\quad$$E_{0}(d_{0})=\emptyset$ }{\footnotesize \par} \item {\footnotesize $\quad$ $\forall\alpha\in T_{0}$ $\{$ }{\footnotesize \par} \item {\footnotesize $\qquad$pose the membership query $``\alpha\in L(A)?"$,}{\footnotesize \par}\item {\footnotesize $\qquad$ $bquery=bquery+1$\label{bq1}}{\footnotesize \par}\item \textbf{\footnotesize $\qquad$if}{\footnotesize{} the teacher's response is $yes$ }{\footnotesize \par} \item \textbf{\footnotesize $\qquad$then}{\footnotesize{} $E_{0}(\alpha)=\{\lambda\}$ }{\footnotesize \par} \item \textbf{\footnotesize $\qquad$else}{\footnotesize{} $E_{0}(\alpha)=\emptyset$ }{\footnotesize \par} \item {\footnotesize $\quad$\} }{\footnotesize \par} \item {\footnotesize $\quad$//Refine the partition of set $T_{0}$ as described in Algorithm 3 }{\footnotesize \par} \item {\footnotesize $\quad$//Construct the current representation $M_{0}$ of the target $DFA$ }{\footnotesize \par} \item {\footnotesize $\quad$//as described in Algorithm 4. }{\footnotesize \par} \item {\footnotesize $\quad$ }{\footnotesize \par} \item {\footnotesize $\quad$//Process the file of examples. }{\footnotesize \par} \item {\footnotesize $\quad$}\textbf{\footnotesize while}{\footnotesize{} $(S\not=empty)$ }{\footnotesize \par}\item {\footnotesize $\quad$ \textbf{do} }{\footnotesize \par} \item {\footnotesize $\qquad$read( S, $\alpha$ ) }{\footnotesize \par} \item {\footnotesize $\qquad$mquery = mquery +1 \label{mq1}}{\footnotesize \par} \item {\footnotesize $\qquad$k = k+1, t = t+1 }{\footnotesize \par} \item {\footnotesize $\qquad$$P_{k}=P_{k-1}\cup\{\alpha\}\label{pf1}$ }{\footnotesize \par} \item {\footnotesize $\qquad$// $P_{k}=P_{k-1}\cup Pref(\alpha)\label{pc1}$ //prefix closure }{\footnotesize \par}\item {\footnotesize $\qquad$$P_{k}^{\prime}=P_{k}\cup\{d_{0}\}$ }{\footnotesize \par} \item {\footnotesize $\qquad$$T_{k}=T_{k-1}\cup\{\alpha\}\cup\{f(\alpha,b)\vert b\in\Sigma\}\label{pf2}$ }{\footnotesize \par} \item {\footnotesize $\qquad$// $T_{k}=T_{k-1}\cup Pref(\alpha) \cup\{f(\alpha,b)\vert\alpha\in P_{k}-P_{k-1},b\in\Sigma\}$\label{pc2}}{\footnotesize \par}\item {\footnotesize $\quad$ //Line \ref{pc2} for prefix closure}{\footnotesize \par} \item {\footnotesize $\qquad$$T'_{k}=T_{k}\cup\{d_{0}\}$ }{\footnotesize \par} \item {\footnotesize $\qquad$$\forall\alpha\in T_{k}-T_{k-1}$ }{\footnotesize \par} \item {\footnotesize $\qquad$$\quad$\{ }{\footnotesize \par} \item {\footnotesize $\qquad$$\qquad$// Fill in the values of $E_{i}(\alpha)$ using membership queries: }{\footnotesize \par} \item {\footnotesize $\qquad$$\qquad$$E_{i}(\alpha)=\{v_{j}\vert0\leq j\leq i,\alpha v_{j}\in L(A)\}$ }{\footnotesize \par} \item {\footnotesize $\qquad$$\qquad$$bquery=bquery+i$ \label{bq2} }{\footnotesize \par} \item {\footnotesize $\quad$$\qquad$\} }{\footnotesize \par} \item {\footnotesize $\qquad$// Refine the partition of the set $T_{k}$ }{\footnotesize \par} \item {\footnotesize $\qquad$}\textbf{\footnotesize if}{\footnotesize{} $\alpha$ is consistent with $M_{t-1}$ }{\footnotesize \par} \item {\footnotesize $\qquad$}\textbf{\footnotesize then}{\footnotesize{} $M_{t}=M_{t-1}$ }{\footnotesize \par} \item {\footnotesize $\qquad$}\textbf{\footnotesize else}{\footnotesize{} construct $M_{t}$ as described in Algorithm 4. }{\footnotesize \par} \item {\footnotesize $\quad$\textbf {end while} }{\footnotesize \par} \item \textbf{\footnotesize end.}{\footnotesize{} }{\footnotesize \par}\end{enumerate}

\caption{\textbf{\label{IDSalg}The IDS Learning Algorithm}}
\end{algorithm}
\begin{algorithm}
\begin{enumerate} \item \textbf{while} $(\exists\alpha,\beta\in P_{k}^{\prime}$ and $b\in\Sigma$ such that $E_{i}(\alpha)=E_{i}(\beta)$ but $E_{i}(f(\alpha,b))\not\neq E_{i}(f(\beta,b)))$  \item \textbf{\footnotesize do}{\footnotesize{} }{\footnotesize \par} \item {\footnotesize $\qquad$Let $\gamma\in E_{i}(f(\alpha,b))\oplus E_{i}(f(\beta,b))$ }{\footnotesize \par} \item {\footnotesize $\qquad$$v_{i+1}=b\gamma$ }{\footnotesize \par} \item {\footnotesize $\qquad$$V=V\cup\{v_{i+1}\}$, $i=i+1$ }{\footnotesize \par} \item {\footnotesize $\qquad$$\forall\alpha\in T_{k}$ pose the membership query $"\alpha v_{i}\in L(A)?"$ }{\footnotesize \par} \item {\footnotesize $\qquad\qquad$\{ }{\footnotesize \par} \item {\footnotesize $\qquad$$\qquad$$bquery=bquery+1$ }{\footnotesize \par} \item \textbf{\footnotesize $\qquad$$\qquad$if}{\footnotesize{} the teacher's response is $yes$ }{\footnotesize \par} \item \textbf{\footnotesize $\qquad$$\qquad$then}{\footnotesize{} $E_{i}(\alpha)=E_{i-1}(\alpha)\cup\{v_{i}\}$ }{\footnotesize \par} \item \textbf{\footnotesize $\qquad\qquad$else}{\footnotesize{} $E_{i}(\alpha)=E_{i-1}(\alpha)$ }{\footnotesize \par} \item \textbf{\footnotesize $\qquad\qquad$end if}{\footnotesize{} }{\footnotesize \par} \item {\footnotesize $\qquad\qquad$\} }{\footnotesize \par} \item \textbf{\footnotesize end while}{\footnotesize{} }{\footnotesize \par} \end{enumerate}

\caption{\label{refpartalg}\textbf{The Refine Partition Algorithm}}
\end{algorithm}
\begin{algorithm}
\begin{enumerate} \item {\footnotesize The states of $M_{t}$ are the sets $E_{i}(\alpha)$, where $\alpha\in T_{k}$ }{\footnotesize \par} \item {\footnotesize The initial state $q_{0}$ is the set $E_{i}(\lambda)$ }{\footnotesize \par} \item {\footnotesize The accepting states are the sets $E_{i}(\alpha)$ where $\alpha\in T_{k}$ and $\lambda\in E_{i}(\alpha)$ }{\footnotesize \par} \item {\footnotesize The transitions of $M_{t}$ are defined as follows: }{\footnotesize \par} \item {\footnotesize $\quad\forall\alpha\in P_{k}^{\prime}$ }{\footnotesize \par} \item \textbf{\footnotesize $\qquad$if}{\footnotesize{} $E_{i}(\alpha)=\emptyset$ }{\footnotesize \par} \item \textbf{\footnotesize $\qquad$then}{\footnotesize{} add self loops on the state $E_{i}(\alpha)$ for all $b\in\Sigma$ }{\footnotesize \par} \item \textbf{\footnotesize $\qquad$else}{\footnotesize{} $\forall b\in\Sigma$ set the transition $\delta(E_{i}(\alpha),b)=E_{i}(f(\alpha,b))$ }{\footnotesize \par} \item \textbf{\footnotesize $\qquad$end if}{\footnotesize{} }{\footnotesize \par} \item \textbf{\footnotesize $\quad$$\forall\beta\in T_{k}-P_{k}^{\prime}$}{\footnotesize \par} \item \textbf{\footnotesize $\qquad$if $\forall\alpha\in P_{k}^{\prime}$ $E_{i}(\beta)\neq E_{i}(\alpha)$ and $E_{i}(\beta)\neq\emptyset$}{\footnotesize \par} \item \textbf{\footnotesize $\qquad$then $\forall b\in\Sigma$ }{\footnotesize set the transition}\textbf{\footnotesize{} $\delta(E_{i}(\beta),b)=\emptyset$}{\footnotesize \par} \item \textbf{\footnotesize $\qquad$end if}{\footnotesize \par} \end{enumerate}

\caption{\textbf{\label{autoconstalg}The Automata Construction Algorithm}}
\end{algorithm}
\begin{table}
\begin{centering}
\begin{tabular}{|c|c|c|}
\hline 
$i$ & $0$ & $1$\tabularnewline
\hline 
\hline 
$v_{i}$ & $\lambda$ & $b$\tabularnewline
\hline 
$E(d_{0})$ & $\emptyset$ & $\emptyset$\tabularnewline
\hline 
$E(\lambda)$ & $\emptyset$ & $\{b\}$\tabularnewline
\hline 
$E(a)$ & $\emptyset$ & $\emptyset$\tabularnewline
\hline 
$E(b)$ & $\{\lambda\}$ & $\{\lambda,\; b\}$\tabularnewline
\hline 
$E(ba)$ & $\emptyset$ & $\emptyset$\tabularnewline
\hline 
$E(bb)$ & $\{\lambda\}$ & $\{\lambda\}$\tabularnewline
\hline 
\end{tabular}
\par\end{centering}

\caption{\label{Atable}For$(b,+)$}

\end{table}
\begin{figure}
\begin{centering}
\includegraphics[width=1\columnwidth,height=0.25\paperheight]{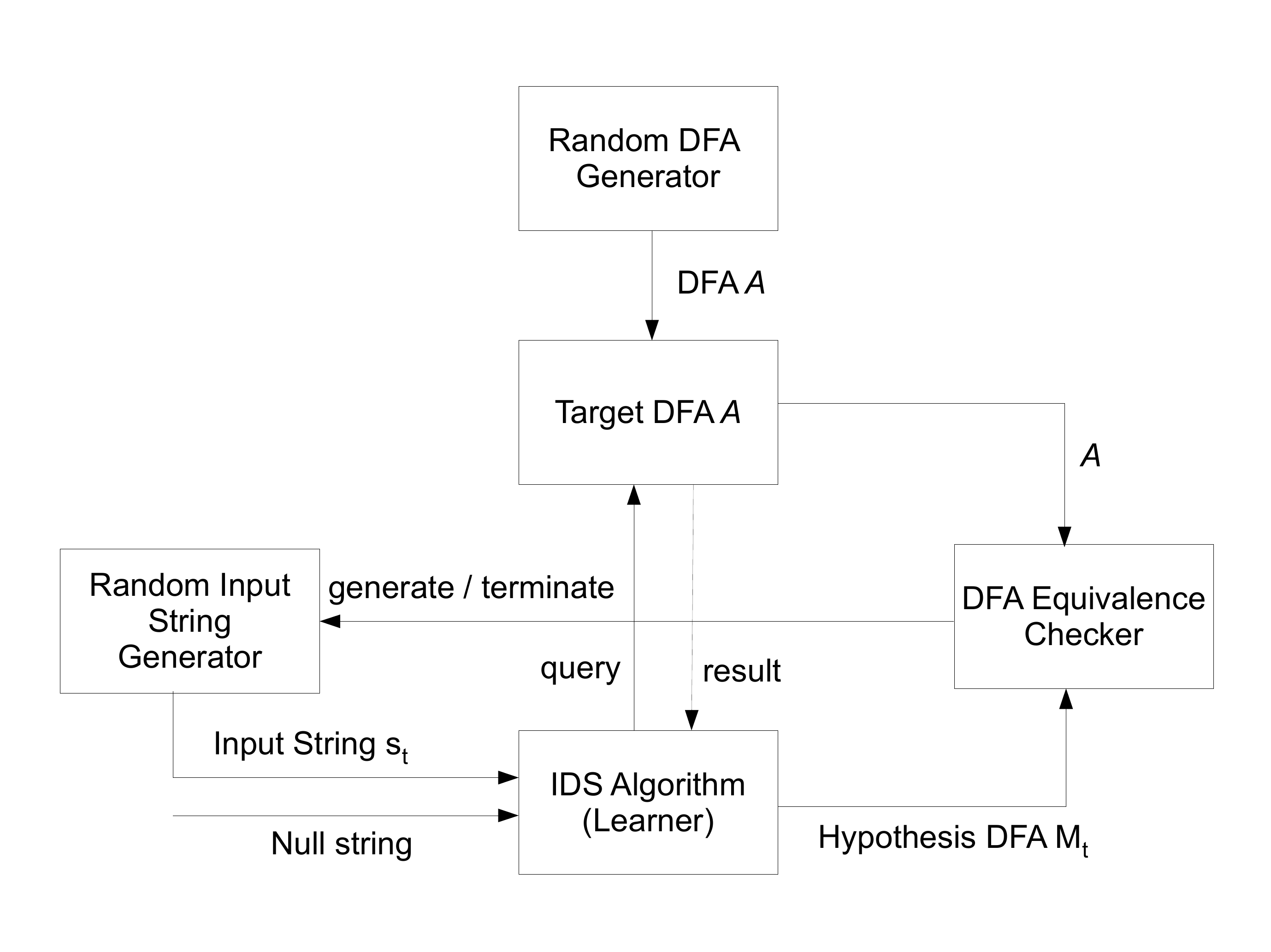}\caption{\label{evalFrame}Evaluation Framework}

\par\end{centering}

\end{figure}
\begin{figure}
\centering{}\includegraphics[width=1\columnwidth,height=0.25\paperheight]{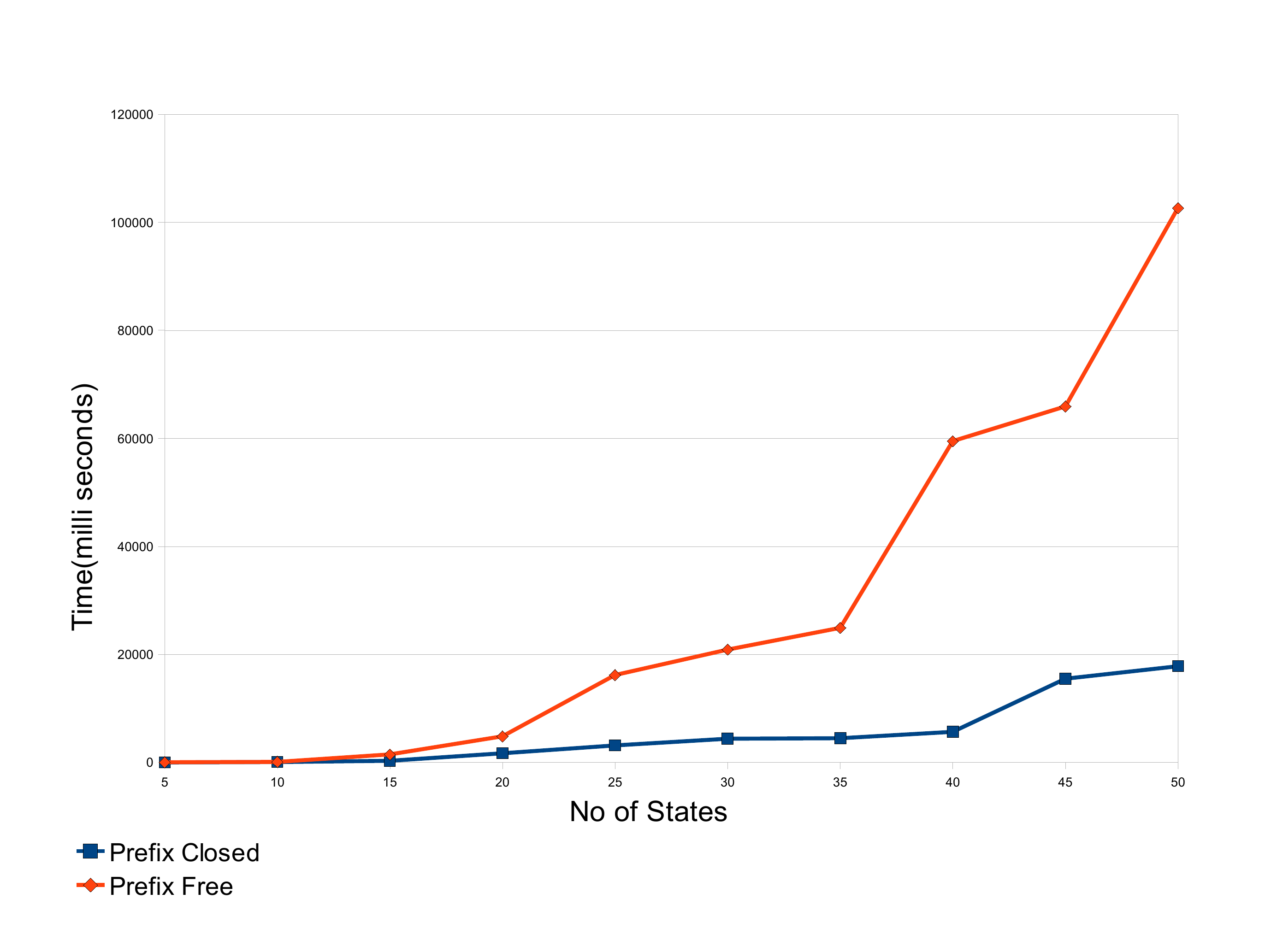}\caption{\label{fig:AvgTime}Average Time Complexity}
\end{figure}
\begin{figure}
\begin{centering}
\includegraphics[width=1\columnwidth,height=0.25\paperheight]{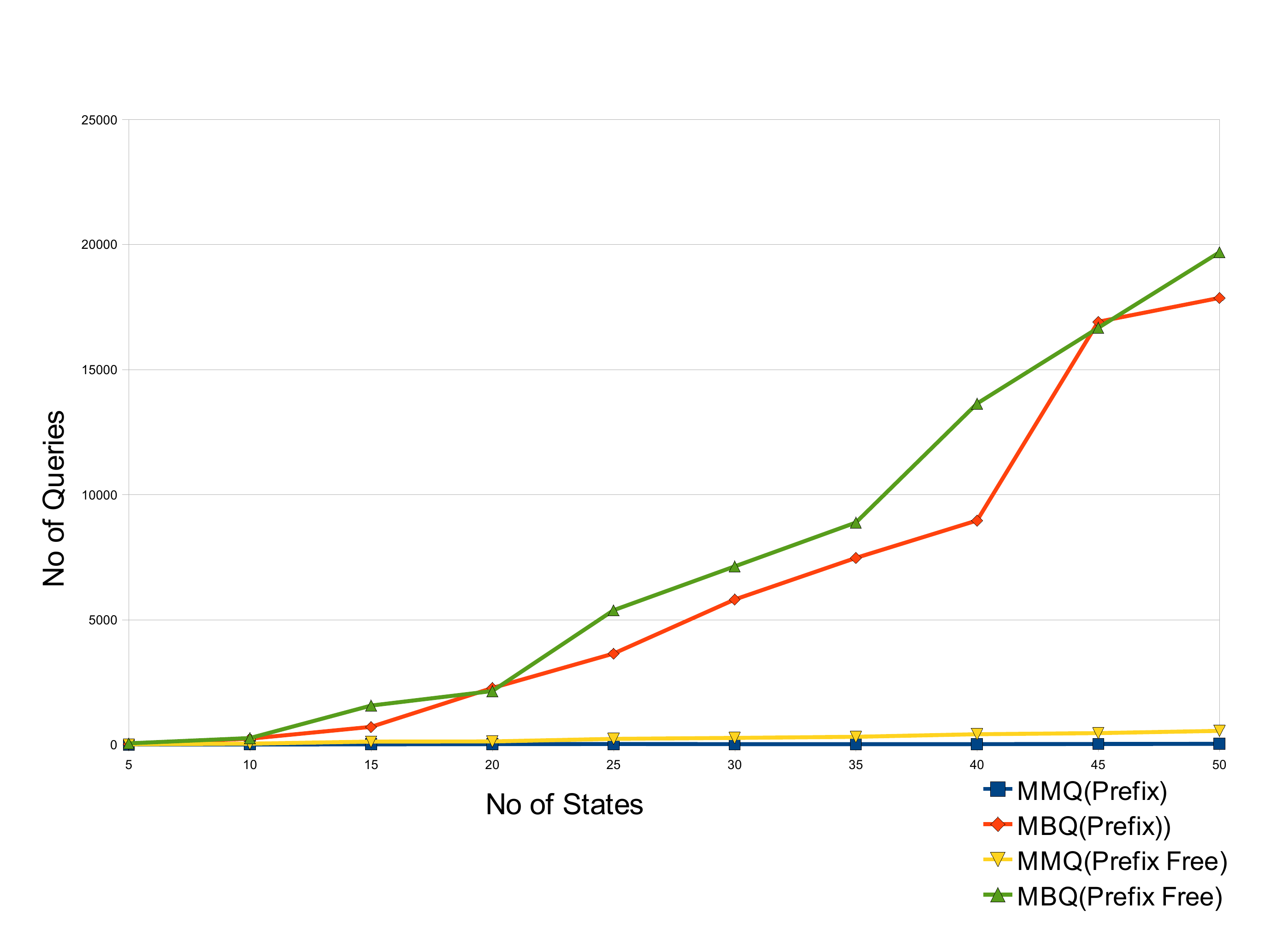}
\par\end{centering}

\caption{\label{fig:AvgQuery}Average Membership/Book-keeping Queries}
\end{figure}

We will present the ID and \emph{IDS} algorithms so that similar variables share the same names. This pedagogic device emphasises similarity in the behaviour of both algorithms. However, there are also important differences in behaviour. Thus, when analysing the behavioural properties of program variables we will carefully distinguish their context as e.g. $v_{n}^{ID}$, $E_{n}^{ID}(\alpha),\ldots$, and $v_{n}^{IDS}$, $E_{n}^{IDS}(\alpha),\ldots$ etc. Our proof of correctness for \emph{IDS} will show how the learning behaviour of \emph{IDS} on a sequence of input strings $s_{1},\ldots,s_{n}\in\Sigma^{*}$ can be simulated by the behaviour of ID on the corresponding set of inputs $\{s_{1},\ldots s_{n}\}$. Once this is established, one can apply the known correctness of ID to establish the correctness of \emph{IDS}. 
\subsection{Behavioural differences between IID and IDS}
The IID algorithm of \cite{Parekh} also presents a simulation method for ID. However following points of difference in behaviour of IID and \emph{IDS }are worth mentioning:  \begin{enumerate} \item IID starts from a null DFA as hypothesis while \emph{IDS }constructs the initial hypothesis after reading all $\sigma\in\Sigma$ from the initial state. \item IID discards all negative examples and waits for the first positive example after the construction of the initial (null) hypothesis to do further construction. \emph{IDS }on the other hand does construction with all negative and positive examples after building an initial hypothesis which makes it more useful for practical software engineering applications identified in Section \ref{sec:Introduction}.\item IID in some cases builds hypotheses which have a partially defined transition function $\delta$ rather than being \emph{left total}. This will be shown with an example in the next section. \emph{IDS} fixes this problem due to lines 10-13 of Algorithm \ref{autoconstalg} described in this paper. \item Unlike \emph{IDS}, there is no prefix free version of IID. \item In addition to the above it is easily shown that IID does not satisfy our Simulation Theorem \ref{thm:Let--be}, and thus the two algorithms are quite different. The behavioural properties of ID that are needed to complete this correctness proof can be stated as follows.\end{enumerate}
\begin{theorem} \textsl{\label{thm:(i)-Let-}(i) Let $P\subseteq\Sigma^{*}$ be a live complete set for a DFA $A$ containing $\lambda$. Then given $P$ and $A$ as input, the ID algorithm terminates and the automaton $M$ returned is the canonical automaton for $L(A)$.}  
\textsl{ (ii) Let $l\in{N}$ be the maximum value of program variable $i^{ID}$ given $P$ and $A$. For all $0\leq n\leq l$ and for all $\alpha\in T$,\[ E_{n}^{ID}(\alpha)=\{v_{j}^{ID}\;\vert\;0\leq j\leq n,\;\alpha v_{j}^{ID}\in L(A)\}.\]}
\end{theorem}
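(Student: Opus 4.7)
The plan is to treat the two parts separately. Part (i) is essentially Angluin's original correctness theorem for ID, which I would reduce to her argument in \cite{Angluin81}; part (ii) is a bookkeeping invariant on the $E_n$ functions that follows from a direct induction on $n$ and will be the hook needed later by the Simulation Theorem.

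For part (i), I would argue termination first. On each pass of the while-loop a witness pair $(\alpha,\beta,b)$ satisfies $E_i(\alpha)=E_i(\beta)$ while $E_i(f(\alpha,b)) \neq E_i(f(\beta,b))$. The chosen $\gamma$ lies in exactly one of the two, so the new distinguishing string $v_{i+1}=b\gamma$ forces $\alpha v_{i+1}$ and $\beta v_{i+1}$ to receive different teacher answers; hence $E_{i+1}$ separates $\alpha$ from $\beta$ and the partition of $T'$ is strictly refined. Since $T'$ is finite and the partition can never be finer than the Nerode equivalence of $A$ restricted to $T'$, the loop halts. At termination the negation of the loop guard gives a congruence on $T'$ with respect to each extension map $f(\cdot,b)$, so the transitions on equivalence classes are well-defined. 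A routine induction on string length then shows $\delta_M^{*}(E_i(\lambda),\alpha)=E_i(\alpha)$ for every $\alpha\in T$, so $L(M)=L(A)$ on $T$. Live completeness of $P$ guarantees that every live state of $A$ is named by some $\alpha\in P$, and the extra element $d_0$ supplies the dead-state class; together this forces $M$ to have exactly as many states as the Nerode quotient of $L(A)$, which makes $M$ the canonical DFA.

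For part (ii), I would use induction on $n$. In the base case $n=0$ we have $v_0=\lambda$, and inspection of lines 7--13 of Algorithm \ref{IDalg} shows $E_0(\alpha)=\{\lambda\}$ exactly when the teacher answers yes to $\alpha\in L(A)$, and $E_0(\alpha)=\emptyset$ otherwise. This coincides with $\{v_j \mid 0\leq j\leq 0,\ \alpha v_j\in L(A)\}$. For the inductive step, assume the formula holds at level $n$. On the next pass through the refinement loop a new distinguishing string $v_{n+1}$ is generated, and for every $\alpha\in T$ lines 19--25 assign $E_{n+1}(\alpha)=E_n(\alpha)\cup\{v_{n+1}\}$ if $\alpha v_{n+1}\in L(A)$ and $E_{n+1}(\alpha)=E_n(\alpha)$ otherwise. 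Combining this update rule with the inductive hypothesis immediately yields the formula at level $n+1$.

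The real technical weight sits in part (i), and within it the key obstacle is to show that the automaton produced at termination is actually the \emph{canonical} DFA and not merely some DFA recognising $L(A)$; this is precisely where the live completeness of $P$ is indispensable, since without it some live state of $A$ could fail to be represented and the quotient would collapse. Because this is essentially Angluin's original result, I would cite \cite{Angluin81} for the full details and devote the written proof largely to carefully spelling out the loop invariant of part (ii), since that invariant is what the subsequent Simulation Theorem for \emph{IDS} will actually exploit.
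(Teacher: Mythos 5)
Your proposal matches the paper's own proof exactly: part (i) is delegated to Angluin's Theorem 3 in \cite{Angluin81}, and part (ii) is handled by induction on $n$. The extra detail you supply (the strict-refinement termination argument and the base/step analysis of the $E_n$ update rule) is a correct expansion of what the paper leaves implicit.
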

\begin{proof} (i) See \cite{Angluin81} Theorem 3. (ii) By induction on $n$.\end{proof}

One difference between ID and IID is the frequency of hypothesis automaton construction. With ID this occurs just once, after a single partition refinement is completed. However, with IID this occurs regularly, after each partition refinement. This difference means that the automaton construction algorithm (Algorithm \ref{IDalg}, lines 28-37) used for ID can no longer be used for IID, (as asserted in \cite{Parekh}) as we show below. Suppose we want to learn the automaton $A$ shown in Fig \ref{fig:Target}.
A positive example for this automaton is $(b,+)$. If we use it to start learning $A$ using the IID algorithm of \cite{Parekh}, we have $P_{0}=Pref(b)=\{b,\lambda\}$ and $P_{0}^{\prime}=\{b,\lambda\}\cup\{d_{0}\}=\{b,\lambda,d_{0}\}$. We then obtain the sets $T_{0}=P_{0}\cup\{f(\alpha,b)\vert(\alpha,b)\in P_{0}\times\Sigma\}=\{b,\lambda\}\cup\{b,\lambda\}\times\{a,b\}=\{\lambda,a,b,ba,bb\}$ and $T_{0}^{\prime}=T_{0}\cup\{d_{0}\}=\{d_{0},\lambda,a,b,ba,bb\}$. The initial column for the table of partition sets is constructed as shown in Table \ref{Atable}.

For $i=0$ and $v_{0}=\lambda$. From this column we see that two elements of the set $P_{0}^{\prime}$ have the same value. i.e $E(d_{0})=E(\lambda)$ but $E(f(d_{0},b))\neq E(f(\lambda,b))$. Therefore we have $\gamma\in E(f(d_{0},b))\oplus E(f(\lambda,b))$ or $\gamma\in\emptyset\oplus\{\lambda\}$. We can choose $\gamma=\lambda$ which gives the distinguishing string $v_{1}=b\gamma=b\lambda=b$. We then extend Table \ref{Atable} for $i=1$. Now we can see that all elements of the set $P_{0}^{\prime}$, which are $b,\lambda$ and $d_{0}$, have distinct values in the last column of the table and so no further refinement of the partition is possible. At this stage IID constructs the next hypothesis automaton $M_{1}$ as shown in Figure \ref{fig:Hypothesis}.

Now we observe that the transition function $\delta$ for this automaton is only partially defined, since there are no outgoing transitions for the state named $\{\lambda\}$. Therefore the IID algorithm of \cite{Parekh} does not always generate a hypothesis automaton with well defined transition function $\delta$. This created problems when we tried to use this algorithm for practical software engineering applications identified in Section \ref{sec:Introduction}, e.g. a model checker when used to verify this kind of a hypothesis can get stuck in such a state with no outgoing transitions. Similarly, an automata equivalence checker which is used to terminate a learning-based testing process goes into an infinite loop because of such a state since it will never find its equivalent state in the target automaton. 

The problem seems to stem from an \emph{unclosed} table in some executions of IID. The notion of \emph{closed }and \emph{consistent }observation table is given in \cite{Angluin87} for L{*} algorithm. The L{*} algorithm also incrementally builds the observation table but keeps asking queries until it becomes \emph{closed} and \emph{consistent}. In this case the table will be closed when $\forall\beta\in T\setminus P^{\prime}\;\exists\alpha\in P^{\prime}$ such that $E_{i}(\alpha)=E_{i}(\beta)$. If $E_{i}(\alpha)\neq E_{i}(\beta)$ as in the above example where $E_{1}(bb)=\{\lambda\}$ is not equal to any of $E_{1}(d_{0})=\emptyset,E_{1}(\lambda)=\{b\}\; or\; E_{1}(b)=\{\lambda,b\}$ then the solution in \cite{Angluin87} is to move $\beta\in T\setminus P^{\prime}$ to set $P^{\prime}$ and ask more queries to rebuild the congruence that might have been affected by this last addition to set $P^{\prime}$. However L{*} is a complete learning algorithm and it only outputs the description of the hypothesis automaton after learning it completely. Therefore for incremental learning the simplest fix is to set the transitions of such states to the dead state as done in lines 10-13 of Algorithm \ref{autoconstalg}. This doesn't require any new entries or shuffling the previous entries up in the table thus keeping the congruence intact.

\section{Correctness of \emph{IDS} Algorithm}\label{corr}
In this section we present our \emph{IDS} incremental learning algorithm for DFA. In fact, we consider two versions of this algorithm, with and without prefix closure of the set of input strings. We then give a rigorous proof that both algorithms correctly learn an unknown DFA in the limit in the sense of \cite{Gold67}.
In Algorithm \ref{IDSalg} we present the main \emph{IDS} algorithm, and in Algorithms \ref{refpartalg} and \ref{autoconstalg} we give its auxiliary algorithms for iterative partition refinement and automaton construction respectively. 
The version of the \emph{IDS} algorithm which appears in Algorithm \ref{IDSalg} we term the \textit{prefix free }\emph{IDS} algorithm, due to lines \ref{pf1} and \ref{pf2}. Notice that lines \ref{pc1} and \ref{pc2} of Algorithm \ref{IDSalg} have been commented out. When these latter two lines are uncommented and instead lines \ref{pf1} and \ref{pf2} are commented out, we obtain a version of the \emph{IDS} algorithm that we term \textit{prefix closed }\emph{IDS}. We will prove that both prefix closed and prefix free \emph{IDS} learn correctly in the limit. However, in Section \ref{exper} we will show that they have quite different performance characteristics in a way that can be expected to influence applications.

We will prove the correctness of the prefix free \emph{IDS} algorithm first, since this proof is somewhat simpler, while the essential proof principles can also be applied to verify the prefix closed \emph{IDS} algorithm. We begin an analysis of the correctness of prefix free \emph{IDS} by confirming that the construction of hypothesis automata carried out by Algorithm \ref{autoconstalg} is well defined.

\begin{proposition} \textsl{\label{pro:For-each-}For each $t\geq0$ the hypothesis automaton $M_{t}$ constructed by the automaton construction Algorithm \ref{autoconstalg} after $t$ input strings have been observed is a well defined DFA.} \end{proposition}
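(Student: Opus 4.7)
The plan is to verify, one by one, that each of the four ingredients of a DFA listed in the Preliminaries is well defined by Algorithm~\ref{autoconstalg}: namely the state set $Q$, the initial state $q_0$, the accepting set $F \subseteq Q$, and the transition function $\delta : Q \times \Sigma \to Q$. The state set $Q = \{E_i(\alpha) \mid \alpha \in T_k\}$ is finite because $T_k$ is finite and $E_i$ is a function on $T_k$ (together with $\{\emptyset\}$ if it is not already present, which is the canonical dead state reached via $d_0 \in P_k'$ with $E_i(d_0) = \emptyset$). Since $\lambda \in P_0 \subseteq T_k$, the initial state $E_i(\lambda)$ lies in $Q$, and $F = \{E_i(\alpha) \in Q \mid \lambda \in E_i(\alpha)\}$ is a well-defined subset of $Q$. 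All remaining work therefore concerns the transition function: I must verify that $\delta$ is (a) single-valued and (b) total.

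For single-valuedness, the issue is that a given state $q \in Q$ can be named by several different strings $\alpha \in T_k$, so I must check that the transition rules in lines 6--9 and lines 10--13 of Algorithm~\ref{autoconstalg} assign the same successor regardless of the representative chosen. The key observation is that Algorithm~\ref{refpartalg} is invoked before Algorithm~\ref{autoconstalg} and terminates only when there are no $\alpha,\beta \in P_k'$ and $b \in \Sigma$ with $E_i(\alpha) = E_i(\beta)$ but $E_i(f(\alpha,b)) \neq E_i(f(\beta,b))$. Consequently, whenever $\alpha, \alpha' \in P_k'$ represent the same state, the prescribed transition $E_i(f(\alpha,b))$ is independent of the representative. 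If $E_i(\beta) = E_i(\alpha)$ for some $\beta \in T_k - P_k'$ and $\alpha \in P_k'$, then $\beta$ fails the guard of line~11 and so does not contribute conflicting transitions. If $E_i(\beta)$ is not matched by any element of $P_k'$ and is non-empty, all such $\beta$ are sent uniformly to $\emptyset$. Finally the empty-set state is always assigned self-loops, since $d_0 \in P_k'$ with $E_i(d_0) = \emptyset$ triggers the \textbf{then}-branch of line~7.

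For totality, every state $q \in Q$ has the form $E_i(\alpha)$ with $\alpha \in T_k$, and I argue by case analysis. If $\alpha \in P_k'$, then lines 6--9 supply a transition for every $b \in \Sigma$: either a self-loop (when $E_i(\alpha) = \emptyset$) or $\delta(E_i(\alpha),b) = E_i(f(\alpha,b))$. In the latter case $f(\alpha,b) \in T_k$ by the construction of $T_k$ on line~26 of Algorithm~\ref{IDSalg}, so the target lies in $Q$. If $\alpha \in T_k - P_k'$, then $E_i(\alpha)$ either coincides with some $E_i(\alpha')$ for $\alpha' \in P_k'$ (and is already fully transitioned) or equals $\emptyset$ (handled via $d_0$) or is genuinely new and non-empty, in which case lines 10--13 send all its successors to the dead state $\emptyset$. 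This exhausts all states, so $\delta$ is total.

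The main obstacle is the single-valuedness argument: it is the one step that does not follow mechanically from the syntactic definitions, because it depends on the invariant that the refinement loop of Algorithm~\ref{refpartalg} has actually restored whenever Algorithm~\ref{autoconstalg} is called. Totality, by contrast, is purely a case enumeration over how the name $\alpha$ of a state relates to $P_k'$, and the well-definedness of $Q$, $q_0$, and $F$ is immediate from their explicit formulas.
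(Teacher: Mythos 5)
Your proposal is correct and takes the same approach the paper intends: the paper's own proof consists of the single remark that ``the main task is to show $\delta$ to be a well defined function, uniquely defined for every state $E_{i}(\alpha)$, $\alpha\in T_{k}$,'' and supplies no further detail. Your single-valuedness argument via the termination condition of the refinement loop of Algorithm~\ref{refpartalg}, together with the case analysis for totality over how a state's name $\alpha$ relates to $P_{k}^{\prime}$, is precisely the content the paper leaves implicit, so there is nothing to correct.
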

\begin{proof} The main task is to show $\delta$ to be well defined function and uniquely defined for every state $E_{i}(\alpha)$, where $\alpha\in T_{k}$. 

Proposition \ref{pro:For-each-} establishes that Algorithm \ref{IDSalg} will generate a sequence of well defined DFA. However, to show that this algorithm learns correctly, we must prove that this sequence of automata converges to the target automaton $A$ given sufficient information about $A$. It will suffice to show that the behaviour of prefix free \emph{IDS} can be simulated by the behaviour of ID, since ID is known to learn correctly given a live complete set of input strings (c.f. Theorem \ref{thm:(i)-Let-}.(i)). The first step in this proof is to show that the sequences of sets of state names $P_{k}^{IDS}$ and $T_{k}^{IDS}$ generated by prefix free $IDS$ converge to the sets $P^{ID}$ and $T^{ID}$ of ID.\end{proof} 

\begin{proposition} \textsl{\label{pro:Let--be}Let $S=s_{1},\ldots,s_{l}$ be any non-empty sequence of input strings $s_{i}\in\Sigma^{*}$ for prefix free $IDS$ and let $P^{ID}=\{\lambda,s_{1},\ldots,s_{l}\}$ be the corresponding input set for ID. } \end{proposition}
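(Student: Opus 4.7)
The plan is to establish, by induction on the loop index $k$ running through $0,1,\ldots,l$, the stronger invariant $P_k^{IDS}=\{\lambda,s_1,\ldots,s_k\}$ together with $T_k^{IDS}=P_k^{IDS}\cup\{f(\alpha,b)\vert(\alpha,b)\in P_k^{IDS}\times\Sigma\}$. Specializing this invariant at $k=l$ immediately yields the conclusion $P_l^{IDS}=P^{ID}$ and $T_l^{IDS}=T^{ID}$ that the proposition (whose statement the excerpt truncates, but which is clearly signalled by the paragraph immediately preceding it) is asking for.

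For the base case $k=0$, the initialization in Algorithm \ref{IDSalg} (line 5) sets $P_0^{IDS}=\{\lambda\}$ and $T_0^{IDS}=\{\lambda\}\cup\Sigma$. The only substantive check is that $\{f(\lambda,b)\vert b\in\Sigma\}=\Sigma$, which is immediate from the definition $f(\lambda,b)=\lambda.b=b$. For the inductive step I would apply the prefix free update rules on lines \ref{pf1} and \ref{pf2}, namely $P_k^{IDS}=P_{k-1}^{IDS}\cup\{s_k\}$ and $T_k^{IDS}=T_{k-1}^{IDS}\cup\{s_k\}\cup\{f(s_k,b)\vert b\in\Sigma\}$; substituting the inductive hypothesis into the latter and distributing the union over the $f$-extensions recovers the $T$-half of the invariant at index $k$, while the $P$-half is immediate.

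The argument is essentially bookkeeping, and the only point to handle with care is the role of $\lambda$ in the definition of $f$: without the identity $f(\lambda,b)=b$ the base case would fail to reconcile $\{\lambda\}\cup\Sigma$ with $\{\lambda\}\cup\{f(\alpha,b)\vert(\alpha,b)\in\{\lambda\}\times\Sigma\}$. No separate statement is needed for the primed versions of these sets, since they differ from their unprimed counterparts only by the fixed singleton $\{d_0\}$, which is appended uniformly by both ID and IDS and therefore preserves the identification automatically. Because the conclusion is an equality of finitely specified sets and the algorithms touch them only through the two explicit update lines, no subtlety arises from the surrounding membership-query and partition-refinement blocks, which act only on the $E_i$ component.
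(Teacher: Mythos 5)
Your proof is correct and follows the same route as the paper, which simply states that (i) and (ii) are proved by induction on $k$ and that (iii) follows; you have filled in the bookkeeping details (the initialization $P_0=\{\lambda\}$, $T_0=\{\lambda\}\cup\Sigma$, the identity $f(\lambda,b)=b$, and the update lines \ref{pf1} and \ref{pf2}) that the paper leaves implicit. Nothing further is needed.
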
 \noindent \textsl{(i) For all $0\leq k\leq l$, $P_{k}^{IDS}=\{\lambda,s_{1},\ldots,s_{k}\}\subseteq P^{ID}$.}
\noindent \textsl{(ii) For all $0\leq k\leq l$, $T_{k}^{IDS}=P_{k}^{IDS}\cup\{f(\alpha,b)\vert\alpha\in P_{k}^{IDS},b\in\Sigma\}\subseteq T^{ID}$.}
\noindent \textsl{(iii) $P_{l}^{IDS}=P^{ID}$ and $T_{l}^{IDS}=T^{ID}$.} \begin{proof} Clearly (iii) follows from (i) and (ii). We prove (i) and (ii) by induction on $k$.  \end{proof}

Next we turn our attention to proving some fundamental loop invariants for Algorithm \ref{IDSalg}. Since this algorithm in turn calls the partition refinement Algorithm \ref{refpartalg} then we have in effect a doubly nested loop structure to analyse. Clearly the two indexing counters $k^{IDS}$ and $i^{IDS}$ (in the outer and inner loops respectively) both increase on each iteration. However, the relationship between these two variables is not easily defined. Nevertheless, since both variables increase from an initial value of zero, we can assume the existence of a monotone re-indexing function that captures their relationship.

\begin{definition}
Let $S=s_{1},\ldots,s_{l}$ be any non-empty sequence of strings $s_{i}\in\Sigma^{*}$. The re-indexing function ${K}^{S}:{N}\to{N}$ \emph{for prefix free} $IDS$ \emph{on input} $S$ \emph{is the unique monotonically increasing function such that for each} $n\in{N}$, ${K}^{S}(n)$ \emph{is the least integer} $m$ \emph{such that program variable} $k^{IDS}$ \emph{has value} $m$ \emph{while the program variable} $i^{IDS}$ \emph{has value} $n$. \emph{Thus, for example,} $K^{S}(0)=0$. \emph{When} $S$ \emph{is clear from the context, we may write} $K$ for $K^{S}$. 
\end{definition}

With the help of such re-indexing functions we can express important invariant properties of the key program variables $v_{j}^{IDS}$ and $E_{n}^{IDS}(\alpha)$, and via Proposition \ref{pro:Let--be} their relationship to $v_{j}^{ID}$ and $E_{n}^{ID}(\alpha)$. Corresponding to the doubly nested loop structure of Algorithm \ref{IDSalg}, the proof of Theorem \ref{thm:Let--be} below makes use of a doubly nested induction argument. 

\begin{theorem} \textbf{\textsl{\emph{(Simulation Theorem}}}\textsl{)\label{thm:Let--be} Let $S=s_{1},\ldots,s_{l}$ be any non-empty sequence of strings $s_{i}\in\Sigma^{*}$.}\textsl{\emph{ }}\textsl{For any execution of prefix free}\textsl{\emph{ }}IDS\textsl{ on}\textsl{\emph{ }}\textsl{S}\textsl{\emph{ }}\textsl{there exists an execution of ID on}\textsl{\emph{ }}\textsl{$\{\lambda,s_{1},\ldots,s_{l}\}$}\textsl{\emph{ }}\textsl{such that for all $m\geq0$:} \end{theorem}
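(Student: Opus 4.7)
The plan is to construct the claimed ID execution step-by-step, letting ID's nondeterministic selections in Algorithm \ref{IDalg} be driven by the corresponding choices IDS happens to make. Concretely, I would proceed by induction on the inner refinement counter $m$, proving simultaneously that (a) $v_m^{IDS}=v_m^{ID}$ and (b) $E_m^{IDS}(\alpha)=E_m^{ID}(\alpha)$ for every $\alpha\in T_{K(m)}^{IDS}$. The base case $m=0$ is immediate: both algorithms set $v_0=\lambda$ and fill $E_0$ by posing exactly the membership queries $``\alpha\in L(A)?"$, which the shared teacher answers identically; since $T_0^{IDS}\subseteq T^{ID}$ by Proposition \ref{pro:Let--be}(ii), the two $E_0$ functions agree on $T_0^{IDS}$.

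For the inductive step I would separate two effects that can take place between inner steps $m$ and $m+1$. First, if $K(m+1)>K(m)$ then IDS reads additional strings $s_{K(m)+1},\ldots,s_{K(m+1)}$ and extends its state-name set from $T_{K(m)}^{IDS}$ to $T_{K(m+1)}^{IDS}$. On every newly added $\alpha$, IDS fills the column by the formula $E_m^{IDS}(\alpha)=\{v_j^{IDS}\mid 0\le j\le m,\ \alpha v_j^{IDS}\in L(A)\}$. Using the inductive hypothesis $v_j^{IDS}=v_j^{ID}$ for $j\le m$ together with Theorem \ref{thm:(i)-Let-}(ii) applied to the ID execution already constructed, this coincides with $E_m^{ID}(\alpha)$, so (b) extends to $T_{K(m+1)}^{IDS}$. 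Second, when IDS's partition refinement (Algorithm \ref{refpartalg}) selects a violating triple $(\alpha,\beta,b)\in P_{K(m+1)}^{IDS\,\prime}\times\Sigma$ and a witness $\gamma^{IDS}$, I would direct the ID execution to select the very same triple and witness at its $(m+1)$-th inner step. This choice is legal for ID because $P_{K(m+1)}^{IDS\,\prime}\subseteq P^{ID\,\prime}$ by Proposition \ref{pro:Let--be}(i), and the inductive hypothesis (b) transfers the violation $E_m^{ID}(\alpha)=E_m^{ID}(\beta)$, $E_m^{ID}(f(\alpha,b))\neq E_m^{ID}(f(\beta,b))$ verbatim from IDS to ID. Consequently $v_{m+1}^{IDS}=b\gamma^{IDS}=v_{m+1}^{ID}$, and the $E_{m+1}$ update is performed on both sides via the identical membership queries $\alpha v_{m+1}\in L(A)$, preserving (b) at step $m+1$.

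The main obstacle, I expect, is not the partition-refinement step itself but the bookkeeping surrounding the doubly nested loop structure and the re-indexing function $K$. Because ID does \emph{all} of its refinement work on a fixed $T^{ID}$ while IDS interleaves refinement with growth of $T_k^{IDS}$, one must argue carefully that a violating triple witnessed by IDS at inner step $m+1$ is still a legal and unrefined choice for ID at its inner step $m+1$; this requires the inductive invariant (b) to be stated on the moving domain $T_{K(m)}^{IDS}$ rather than on all of $T^{ID}$, and it requires Theorem \ref{thm:(i)-Let-}(ii) to justify freshly computed $E_m$ columns on new members of $T^{IDS}$. Once this domain-management is handled cleanly, termination of the simulation coincides with termination of IDS: when IDS finds no violating triple in $P_l^{IDS\,\prime}$, Proposition \ref{pro:Let--be}(iii) ensures $P_l^{IDS\,\prime}=P^{ID\,\prime}$ and $T_l^{IDS}=T^{ID}$, so by (a)-(b) the ID execution is also unable to refine further and both algorithms halt with identical $v_j$ and $E_m$ data, yielding identical hypothesis automata via Algorithms \ref{autoconstalg} and \ref{IDalg}.
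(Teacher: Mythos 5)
Your proposal follows essentially the same route as the paper: the paper's proof consists only of the remark that the theorem follows by a doubly nested induction over the loop counters using Proposition \ref{pro:Let--be}(i) (together with Theorem \ref{thm:(i)-Let-}(ii)), and your argument is a faithful expansion of exactly that induction, with ID's nondeterministic choices on line 17 of Algorithm \ref{IDalg} driven by the choices IDS makes in Algorithm \ref{refpartalg}. The only cosmetic differences are that you flatten the double induction into a single induction on the refinement counter with a case split on whether $K$ advances, and that clause (i)(b) of the theorem (distinctness of the $v_j$, which follows from the usual symmetric-difference argument showing $v_{n}$ separates two strings that agree on all earlier $v_{j}$) is left implicit.
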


\noindent \textsl{(i) For all $n\geq0$ if $K(n)=m$ then:}
\textsl{(a) for all $0\leq j\leq n$, $v_{j}^{IDS}=v_{j}^{ID}$,}
\textsl{(b) for all $0\leq j<n$, $v_{n}^{IDS}\not=v_{j}^{IDS}$,}
\textsl{(c) for all $\alpha\in T_{m}^{IDS}$, $E_{n}^{IDS}(\alpha)=\{v_{j}^{IDS}\vert0\leq j\leq n,\alpha v_{j}^{IDS}\in L(A)\}$.}
\noindent \textsl{(ii) If $m>0$ then let $p\in N$ be the greatest integer such that $K(p)=m-1$. Then for all $\alpha\in T_{m}^{IDS}$, $E_{p}^{IDS}(\alpha)=\{v_{j}^{IDS}\vert0\leq j\leq p,\alpha v_{j}^{IDS}\in L(A)\}$.}
\noindent \textsl{(iii) The mth partition refinement of }\emph{IDS}\textsl{ terminates.} 

\begin{proof} \noindent By induction on $m$ using Proposition \ref{pro:Let--be}(i).  \end{proof} \noindent Notice that in the statement of Theorem \ref{thm:Let--be} above, since both ID and \emph{IDS} are non-deterministic algorithms (due to the non-deterministic choice on line 17 of Algorithm \ref{IDalg} and line 3 of Algorithm \ref{refpartalg}), then we can only talk about the existence of some correct simulation. Clearly there are also simulations of \emph{IDS} by ID which are not correct, but this does not affect the basic correctness argument. 

\begin{corollary} \textsl{\label{cor:Let--be}Let $S=s_{1},\ldots,s_{l}$ be any non-empty sequence of strings $s_{i}\in\Sigma^{*}$. Any execution of prefix free IDS on $S$ terminates with the program variable $k^{IDS}$ having value $l$. } \end{corollary}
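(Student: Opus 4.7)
The plan is to deduce this corollary directly from the structural form of Algorithm \ref{IDSalg} combined with the termination statement already provided by Theorem \ref{thm:Let--be}(iii). The key observation is that the outer \textbf{while} loop of \emph{IDS} is driven solely by the emptiness of the input buffer $S$: each iteration consumes exactly one string via $\mathrm{read}(S,\alpha)$ and unconditionally performs $k = k+1$ before any branching occurs. Hence, provided every iteration of the outer loop body in fact terminates, the loop will execute exactly $l$ times, the final value of $k^{IDS}$ will be $l$, and the corollary follows.

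What remains is therefore a termination check on the body of a single iteration. The body performs three pieces of work after updating $P_{k}^{IDS}$, $P_{k}'^{IDS}$, and $T_{k}^{IDS}$: (a) it fills in $E_i^{IDS}(\alpha)$ for every $\alpha \in T_k^{IDS} - T_{k-1}^{IDS}$ via finitely many membership queries; (b) it invokes the partition refinement routine of Algorithm \ref{refpartalg}; and (c) it either keeps $M_{t-1}$ or invokes Algorithm \ref{autoconstalg} to build a new hypothesis $M_t$. I would argue termination of each part in turn.

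Part (a) is immediate: by Proposition \ref{pro:Let--be}(ii) the set $T_k^{IDS} - T_{k-1}^{IDS}$ is finite (it contains at most $1 + |\Sigma|$ new strings in the prefix free case), and the inner loop makes exactly $i$ new queries per new element. Part (b) is exactly the content of Theorem \ref{thm:Let--be}(iii): the $m$th partition refinement of \emph{IDS} terminates, applied to the value $m = k^{IDS}$ reached on the current outer iteration. Part (c), when executed, terminates because by Proposition \ref{pro:For-each-} the construction of Algorithm \ref{autoconstalg} produces a well-defined finite DFA in finitely many steps (the sets $P_k'^{IDS}$, $T_k^{IDS}$, and $\Sigma$ are all finite, so the state enumeration and transition assignment loops complete).

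Assembling these observations, a trivial induction on $k = 0, 1, \ldots, l$ shows that each iteration of the outer loop terminates, so after precisely $l$ iterations the buffer $S$ is empty and the loop exits with $k^{IDS} = l$. The only non-trivial step among these is (b), and its termination has already been established in Theorem \ref{thm:Let--be}(iii) by the doubly nested induction; so from the perspective of this corollary the argument is essentially a routine bookkeeping verification layered on top of that theorem, and I do not expect any substantive obstacle.
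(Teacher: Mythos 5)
Your argument is correct and follows essentially the same route as the paper: the paper's proof is exactly the observation that termination follows from Simulation Theorem \ref{thm:Let--be}.(iii) (the only non-trivial ingredient, covering the partition refinement) together with the fact that the outer \textbf{while} loop exits once $S$ is empty, which forces $k^{IDS}=l$. Your write-up merely makes the routine bookkeeping steps explicit.
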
 \begin{proof} Follows from Simulation Theorem \ref{thm:Let--be}.(iii) since clearly the while loop of Algorithm \ref{IDSalg} terminates when the input sequence $S$ is empty.  \end{proof} \noindent Using the detailed analysis of the invariant properties of the program variables $P_{k}^{IDS}$ and $T_{k}^{IDS}$ in Proposition \ref{pro:Let--be} and $v_{j}^{IDS}$ and $E_{n}^{IDS}(\alpha)$ in Simulation Theorem \ref{thm:Let--be} it is now a simple matter to establish correctness of learning for the prefix free \emph{IDS} Algorithm.

\begin{theorem} \textbf{\textsl{\emph{\label{thm:(Correctness-Theorem)-Let}(Correctness Theorem)}}}\textsl{ Let $S=s_{1},\ldots,s_{l}$ be any non-empty sequence of strings $s_{i}\in\Sigma^{*}$ such that $\{\lambda,s_{1},\ldots,s_{l}\}$ is a live complete set for a DFA $A$. Then prefix free IDS terminates on $S$ and the hypothesis automaton $M_{l}^{IDS}$ is a canonical representation of $A$. } \end{theorem}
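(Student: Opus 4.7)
The plan is to use the previously established Simulation Theorem as a bridge between prefix free IDS and Angluin's ID algorithm, so that the known correctness of ID on a live complete set (Theorem \ref{thm:(i)-Let-}.(i)) can be pulled back to prefix free IDS. Termination is essentially already in hand, so the real content is matching up the final hypothesis $M_l^{IDS}$ with the automaton that ID would return on the set $\{\lambda,s_1,\ldots,s_l\}$.

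First, I would appeal to Corollary \ref{cor:Let--be} to conclude that prefix free IDS halts on $S$ with $k^{IDS}=l$. Next, let $n^{*}$ be the terminal value of the inner counter $i^{IDS}$ after the final partition refinement and set $m^{*}=K^{S}(n^{*})=l$; then by Proposition \ref{pro:Let--be}.(iii) we have $P_{l}^{IDS}=P^{ID}$ and $T_{l}^{IDS}=T^{ID}$, so the hypothesis state name set of IDS coincides with that used by ID. By Simulation Theorem \ref{thm:Let--be}.(i), along the simulating execution of ID on $\{\lambda,s_1,\ldots,s_l\}$ the distinguishing strings agree ($v_{j}^{IDS}=v_{j}^{ID}$ for $0\le j\le n^{*}$) and for every $\alpha\in T_{l}^{IDS}$ the partition values agree with $\{v_{j}^{IDS}\mid 0\le j\le n^{*},\ \alpha v_{j}^{IDS}\in L(A)\}$; combined with Theorem \ref{thm:(i)-Let-}.(ii), this gives $E_{n^{*}}^{IDS}(\alpha)=E_{n^{*}}^{ID}(\alpha)$ on the common domain.

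I would then observe that the two constructions of the hypothesis automaton, Algorithm \ref{autoconstalg} for IDS and lines 28--37 of Algorithm \ref{IDalg} for ID, are defined pointwise from exactly the same ingredients (state name set, initial state $E(\lambda)$, acceptance via $\lambda\in E(\alpha)$, and transitions $E(\alpha)\mapsto E(f(\alpha,b))$), with the extra clause in lines 10--13 of Algorithm \ref{autoconstalg} only adding transitions to the dead state from non-representative $\beta\in T_{l}^{IDS}\setminus P_{l}^{IDS\prime}$ whose class is not represented in $P_{l}^{IDS\prime}$; since in the limit $P_{l}^{IDS}=P^{ID}$ is live complete, every live state of $A$ is represented by some $\alpha\in P_{l}^{IDS}$, so the extra clause only adjoins dead-state transitions, which agrees with the canonical automaton. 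Hence $M_{l}^{IDS}$ is isomorphic to the automaton $M$ returned by ID on $\{\lambda,s_1,\ldots,s_l\}$, which by Theorem \ref{thm:(i)-Let-}.(i) is the canonical DFA for $L(A)$.

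The step I expect to be the main obstacle is showing that the extra clause in Algorithm \ref{autoconstalg} (lines 10--13), which treats unrepresented classes in $T_{l}^{IDS}\setminus P_{l}^{IDS\prime}$, does not spoil the match with the ID output. This requires invoking live completeness of $\{\lambda,s_1,\ldots,s_l\}$ to argue that any $\beta$ triggering that clause denotes a dead state of $A$, so that mapping its outgoing transitions to the dead state is exactly what the canonical DFA prescribes; once this is justified, the rest of the argument reduces to transport along the simulation and a direct appeal to correctness of ID.
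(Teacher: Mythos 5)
Your proposal is correct and follows essentially the same route as the paper's proof: termination via Corollary \ref{cor:Let--be}, agreement of the $E$-sets via Simulation Theorem \ref{thm:Let--be}.(i) together with Theorem \ref{thm:(i)-Let-}.(ii), identification of $T_{l}^{IDS}$ with $T^{ID}$ and $P_{l}^{\prime IDS}$ with $P^{\prime ID}$ via Proposition \ref{pro:Let--be}.(iii), and then transfer of the canonicity of ID's output to $M_{l}^{IDS}$. Your extra care about lines 10--13 of Algorithm \ref{autoconstalg} is a welcome addition the paper glosses over, though note that under live completeness the guard $E_{i}(\beta)\neq\emptyset$ makes that clause vacuous at the terminal stage (a $\beta$ denoting a dead state already has $E_{i}(\beta)=\emptyset$, and a $\beta$ denoting a live state shares its $E_{i}$-value with some $\alpha\in P_{l}^{\prime IDS}$), which yields your conclusion even more directly.
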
 \begin{proof} By Corollary \ref{cor:Let--be}, prefix free \emph{IDS} terminates on $S$ with the variable $k^{IDS}$ having value $l$. By Simulation Theorem \ref{thm:Let--be}.(i) and Theorem \ref{thm:(i)-Let-}.(ii), there exists an execution of ID on $\{\lambda,s_{1},\ldots,s_{l}\}$ such that $E_{n}^{IDS}(\alpha)=E_{n}^{ID}(\alpha)$ for all $\alpha\in T_{l}^{IDS}$ and any $n$ such that $K(n)=l$. By Proposition \ref{pro:Let--be}.(iii), $T_{l}^{IDS}=T^{ID}$ and $P_{l}^{\prime IDS}=P^{\prime ID}$. So letting $M^{ID}$ be the canonical representation of $A$ constructed by ID using $\{\lambda,s_{1},\ldots,s_{l}\}$ then $M^{ID}$ and $M_{l}^{IDS}$ have the same state sets, initial states, accepting states and transitions.  \end{proof}

\noindent Our next result confirms that the hypothesis automaton $M_{t}^{IDS}$ generated after $t$ input strings have been read is consistent with all currently known observations about the target automaton. This is quite straightforward in the light of Simulation Theorem \ref{thm:Let--be}. \begin{theorem} \textbf{\emph{(Compatibility Theorem)}} \textsl{Let $S=s_{1},\ldots,s_{l}$ be any non-empty sequence of strings $s_{i}\in\Sigma^{*}$. For each $0\leq t\leq l$, $M_{t}^{IDS}$ is compatible with $A$ on $\{\lambda,s_{1},\ldots,s_{t}\}$.} \end{theorem}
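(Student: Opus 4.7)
The plan is to proceed by induction on $t$, with the Simulation Theorem \ref{thm:Let--be} providing the key technical leverage. The central observation is that, since $v_0^{IDS} = \lambda$, the Simulation Theorem gives $\lambda \in E_n^{IDS}(\alpha)$ iff $\alpha \in L(A)$, so the acceptance label of any state $E_n^{IDS}(\alpha)$ in $M_t^{IDS}$ already records $A$'s classification of $\alpha$.

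For the base case $t = 0$, after the initialization phase of Algorithm \ref{IDSalg} (lines 3--12) and its invocation of Algorithm \ref{refpartalg}, the tracing of $\lambda$ in $M_0^{IDS}$ terminates trivially at the initial state $E_n^{IDS}(\lambda)$, which is accepting iff $\lambda \in E_n^{IDS}(\lambda)$. Applying Simulation Theorem \ref{thm:Let--be}(i)(c) to $\alpha = \lambda$ and $v_0^{IDS} = \lambda$ immediately yields $\lambda \in E_n^{IDS}(\lambda)$ iff $\lambda \in L(A)$, establishing compatibility on $\{\lambda\}$.

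For the inductive step, assume $M_{t-1}^{IDS}$ is compatible with $A$ on $\{\lambda, s_1, \ldots, s_{t-1}\}$ and consider the processing of $s_t$ in Algorithm \ref{IDSalg}. The consistency check on line 38 splits the argument into two cases. If $s_t$ is consistent with $M_{t-1}^{IDS}$, the algorithm sets $M_t^{IDS} = M_{t-1}^{IDS}$; the induction hypothesis supplies compatibility on $\{\lambda, s_1, \ldots, s_{t-1}\}$, and the consistency check itself adds compatibility on $s_t$. If $s_t$ is inconsistent, then $M_t^{IDS}$ is rebuilt by Algorithm \ref{autoconstalg} after the new entries $E_i(\alpha)$ for $\alpha \in T_t^{IDS}\setminus T_{t-1}^{IDS}$ have been computed by membership queries and any required partition refinement has been run. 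By Simulation Theorem \ref{thm:Let--be}(i)(c), for each $s_j \in P_t^{IDS} = \{\lambda, s_1, \ldots, s_t\}$ we have $\lambda \in E_n^{IDS}(s_j)$ iff $s_j \in L(A)$, so it suffices to show that the trace of $s_j$ in $M_t^{IDS}$ from the initial state $E_n^{IDS}(\lambda)$ ends at the state $E_n^{IDS}(s_j)$; this follows from the transition rules of Algorithm \ref{autoconstalg} together with the right-congruence that Algorithm \ref{refpartalg} imposes on $P_t'$.

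The main obstacle is verifying this last trace property. Because the prefix free variant does not place intermediate prefixes of $s_j$ into $P_t'$, one cannot simply iterate the definition $\delta(E_n(\alpha), b) = E_n(f(\alpha, b))$ prefix by prefix. Instead one must combine the partition-refinement congruence on $P_t'$ with the dead-state default transitions introduced for $\beta \in T_k \setminus P_k'$ by lines 10--13 of Algorithm \ref{autoconstalg}, arguing inductively on the length of the prefix read that the trace either agrees with $E_n^{IDS}$ of the prefix or arrives at a state whose acceptance label still matches that of $A$ on $s_j$. Once this trace invariant is secured, the conclusion follows immediately from the Simulation Theorem.
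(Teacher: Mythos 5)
There is a genuine gap, and it comes from how you read ``compatible.'' You take compatibility to mean that $M_t^{IDS}$ and $A$ agree as language acceptors on $\{\lambda,s_1,\ldots,s_t\}$, i.e.\ that the \emph{trace} of $s_j$ from the initial state of $M_t^{IDS}$ ends in an accepting state iff $s_j\in L(A)$. The paper's proof opens by defining compatibility much more weakly: $M_t^{IDS}$ is compatible with $A$ on $\{\lambda,s_1,\ldots,s_t\}$ iff for each $0\leq j\leq t$, $s_j\in L(A)\Leftrightarrow\lambda\in E_{i_t}^{IDS}(s_j)$ --- that is, the state \emph{named} by $s_j$ is accepting exactly when $s_j\in L(A)$; nothing is claimed about where the trace of $s_j$ actually lands. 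Under that definition the whole theorem is the one-line consequence of Simulation Theorem \ref{thm:Let--be}(i)(c) and $v_0^{IDS}=\lambda$ that you state in your first paragraph: $\lambda\in E_{i_t}^{IDS}(s_j)$ iff $s_jv_0^{IDS}=s_j\in L(A)$. No induction on $t$ and no case split on the consistency check is needed; those parts of your argument are superfluous for the claim as the paper intends it.

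The trace property you flag as ``the main obstacle'' is not merely an obstacle --- it fails, so your sketch cannot be completed. For prefix free \emph{IDS} the intermediate prefixes of $s_j$ need not lie in $P_k^{\prime}$, and the refinement loop enforces the congruence condition only on pairs drawn from $P_k^{\prime}$; a prefix of $s_j$ whose row coincides with that of a dead-looking element of $P_k^{\prime}$ can send the trace permanently into the state $\emptyset$. Concretely, take $\Sigma=\{a,b\}$, $L(A)=\{ab\}$ and $S=s_1=ab$. Then $E_0(\lambda)=E_0(a)=E_0(b)=E_0(d_0)=\emptyset$ while $E_0(ab)=\{\lambda\}$, no refinement is triggered, the initial state of $M_1^{IDS}$ is the set $\emptyset$ carrying self-loops on both letters, and the trace of $ab$ never reaches the accepting state named $E_0(ab)$. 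Hence $ab\in L(A)$ but $ab\notin L(M_1^{IDS})$, directly violating the invariant you propose (``the trace \ldots{} arrives at a state whose acceptance label still matches that of $A$ on $s_j$''). The theorem is true only in the paper's name-based sense of compatibility, for which your opening observation already suffices; the stronger language-agreement statement you set out to prove is false for prefix free \emph{IDS}.
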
 \begin{proof} By definition, $M_{t}^{IDS}$ is compatible with $A$ on $\{\lambda,s_{1},\ldots,s_{t}\}$ if, and only if, for each $0\leq j\leq t$, $s_{j}\in L(A)\Leftrightarrow\lambda\in E_{i_{t}}^{IDS}(s_{j})$, where $i_{t}$ is the greatest integer such that $K(i_{t})=t$ and the sets $E_{i_{t}}^{IDS}(\alpha)$ for $\alpha\in T_{t}^{IDS}$ are the states of $M_{t}^{IDS}$. Now $v_{0}^{IDS}=\lambda$. So by Simulation Theorem \ref{thm:Let--be}.(i).(c), if $s_{j}\in L(A)$ then $s_{j}v_{0}^{IDS}\in L(A)$ so $v_{0}^{IDS}\in E_{i_{t}}^{IDS}(s_{j})$, i.e. $\lambda\in E_{i_{t}}^{IDS}(s_{j})$, and if $s_{j}\not\in L(A)$ then $s_{j}v_{0}^{IDS}\not\in L(A)$ so $v_{0}^{IDS}\not\in E_{i_{t}}^{IDS}(s_{j})$, i.e. $\lambda\not\in E_{i_{t}}^{IDS}(s_{j})$.  \end{proof}

\noindent Let us briefly consider the correctness of prefix closed \emph{IDS}. We begin by observing that the non-sequential ID Algorithm \ref{IDalg} does not compute any prefix closure of input strings. Therefore, Proposition \ref{pro:Let--be} does not hold for prefix closed \emph{IDS}. In order to obtain a simulation between prefix closed \emph{IDS} and ID we modify Proposition \ref{pro:Let--be} to the following. \begin{proposition} \textsl{Let $S=s_{1},\ldots,s_{l}$ be any non-empty sequence of input strings $s_{i}\in\Sigma^{*}$ for prefix closed $IDS$ and let $P^{ID}=Pref(\{\lambda,s_{1},\ldots,s_{l}\})$ be the corresponding input set for ID. } \end{proposition}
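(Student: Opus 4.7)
The plan is to mirror the proof of Proposition \ref{pro:Let--be} for prefix free \emph{IDS} almost verbatim, but adjusted so that prefix closure is tracked at every step. I interpret the (elided) claims of this proposition to be the natural analogs: (i) $P_k^{IDS}=Pref(\{\lambda,s_1,\ldots,s_k\})\subseteq P^{ID}$; (ii) $T_k^{IDS}=P_k^{IDS}\cup\{f(\alpha,b)\mid\alpha\in P_k^{IDS},\,b\in\Sigma\}\subseteq T^{ID}$; and (iii) $P_l^{IDS}=P^{ID}$ and $T_l^{IDS}=T^{ID}$. Part (iii) is an immediate specialisation of (i) and (ii) at $k=l$, since $Pref(\{\lambda,s_1,\ldots,s_l\})=P^{ID}$ by definition, and because $T^{ID}=P^{ID}\cup\{f(\alpha,b)\mid(\alpha,b)\in P^{ID}\times\Sigma\}$ from the construction in Algorithm \ref{IDalg}. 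So I would focus the proof on (i) and (ii), proceeding by simultaneous induction on $k$.

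For the base case $k=0$, initialisation of Algorithm \ref{IDSalg} yields $P_0^{IDS}=\{\lambda\}=Pref(\{\lambda\})$, and $T_0^{IDS}=P_0^{IDS}\cup\Sigma$, which matches the form in (ii) since $f(\lambda,b)=b$ for each $b\in\Sigma$. Both are contained in $P^{ID}$ and $T^{ID}$ respectively because $\lambda\in P^{ID}$ and $P^{ID}$ is closed under prefix.

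For the inductive step, assume (i) and (ii) for $k-1$. The prefix closed update on line \ref{pc1} gives $P_k^{IDS}=P_{k-1}^{IDS}\cup Pref(s_k)$. Combining with the induction hypothesis and the distributivity identity $Pref(X\cup Y)=Pref(X)\cup Pref(Y)$, I get $P_k^{IDS}=Pref(\{\lambda,s_1,\ldots,s_{k-1}\})\cup Pref(s_k)=Pref(\{\lambda,s_1,\ldots,s_k\})$, establishing (i); the containment in $P^{ID}$ is immediate. For (ii), the line \ref{pc2} update gives $T_k^{IDS}=T_{k-1}^{IDS}\cup Pref(s_k)\cup\{f(\alpha,b)\mid\alpha\in P_k^{IDS}-P_{k-1}^{IDS},\,b\in\Sigma\}$. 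Substituting the inductive form of $T_{k-1}^{IDS}$ and using the freshly proven formula for $P_k^{IDS}$, the $P$-part yields $P_{k-1}^{IDS}\cup Pref(s_k)=P_k^{IDS}$, while the one-symbol extensions partition into those from $P_{k-1}^{IDS}$ (inherited) and those from $P_k^{IDS}-P_{k-1}^{IDS}$ (newly added), whose union is $\{f(\alpha,b)\mid\alpha\in P_k^{IDS},\,b\in\Sigma\}$. The inclusion $T_k^{IDS}\subseteq T^{ID}$ then follows because $P_k^{IDS}\subseteq P^{ID}$ and $T^{ID}$ is closed under the one-symbol extension map $f(\cdot,b)$.

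The main (and only) obstacle is purely bookkeeping: confirming that the partitioned update in line \ref{pc2} (which only extends the newly introduced prefixes rather than all of $P_k^{IDS}$) is exactly what is needed to preserve the invariant (ii). The key algebraic fact that makes this work is that the one-symbol extension set over $P_k^{IDS}$ decomposes as the disjoint union of the old extensions over $P_{k-1}^{IDS}$ (already present in $T_{k-1}^{IDS}$ by the induction hypothesis) and the new extensions over $P_k^{IDS}-P_{k-1}^{IDS}$ (added explicitly on line \ref{pc2}). Once this decomposition is noted, the induction closes as cleanly as in the prefix free case, and the remainder of the correctness development (analogs of the Simulation Theorem, Correctness Theorem and Compatibility Theorem) can proceed by substituting this revised proposition wherever Proposition \ref{pro:Let--be} was invoked.
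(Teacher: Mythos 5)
Your proposal is correct and follows exactly the route the paper intends: the paper's proof is simply ``similar to the proof of Proposition \ref{pro:Let--be}'', i.e.\ a simultaneous induction on $k$ over claims (i) and (ii) with (iii) obtained by specialising to $k=l$, which is precisely what you carry out (you merely supply the details, including the correct reading of the typo $P_k^{IID}$ as $P_k^{IDS}$ in part (ii), that the paper leaves implicit).
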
 \noindent \textsl{(i) For all $0\leq k\leq l$, $P_{k}^{IDS}=Pref(\{\lambda,s_{1},\ldots,s_{k}\})\subseteq P^{ID}$.}
\noindent \textsl{(ii) For all $0\leq k\leq l$, $T_{k}^{IDS}=P_{k}^{IDS}\cup\{f(\alpha,b)\vert\alpha\in P_{k}^{IID},b\in\Sigma\}\subseteq T^{ID}$.}
\noindent \textsl{(iii) $P_{l}^{IDS}=P^{ID}$ and $T_{l}^{IDS}=T^{ID}$} \begin{proof} Similar to the proof of Proposition \ref{pro:Let--be}.\end{proof} \begin{theorem} \textbf{\textsl{\emph{(Correctness Theorem)}}}\textsl{ Let $S=s_{1},\ldots,s_{l}$ be any non-empty sequence of strings $s_{i}\in\Sigma^{*}$ such that $\{\lambda,s_{1},\ldots,s_{l}\}$ is a live complete set for a DFA $A$. Then prefix closed $IDS$ terminates on $S$ and the hypothesis automaton $M_{l}^{IDS}$ is a canonical representation of $A$.}\end{theorem}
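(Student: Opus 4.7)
The plan is to mirror the proof strategy used for prefix free \emph{IDS}, substituting the modified Proposition for prefix closed \emph{IDS} in place of the original Proposition~\ref{pro:Let--be}. Let $P^{ID} = Pref(\{\lambda, s_1, \ldots, s_l\})$. The first step is to establish an analogue of Simulation Theorem~\ref{thm:Let--be} for prefix closed \emph{IDS}: for any execution of prefix closed \emph{IDS} on $S$, there exists an execution of ID on $P^{ID}$ such that the distinguishing strings $v_j^{IDS}$ coincide with $v_j^{ID}$ and the partition values $E_n^{IDS}(\alpha)$ coincide with $E_n^{ID}(\alpha)$ on their common domain. The proof structure is identical to that of Theorem~\ref{thm:Let--be} via a doubly nested induction on the re-indexing function $K^S$, with the only change being that at each outer iteration $k$ the newly added names to $P_k^{IDS}$ are the prefixes of $s_k$ not yet present, rather than $s_k$ alone. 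This in turn yields termination of prefix closed \emph{IDS} on $S$, as an analogue of Corollary~\ref{cor:Let--be}.

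Next I would verify that the input set $P^{ID} = Pref(\{\lambda, s_1, \ldots, s_l\})$ satisfies the hypotheses of Theorem~\ref{thm:(i)-Let-}.(i). Since $\{\lambda, s_1, \ldots, s_l\}$ is live complete for $A$ by assumption, and any superset of a live complete set is itself live complete, $P^{ID}$ is live complete for $A$. It also trivially contains $\lambda$. Therefore Theorem~\ref{thm:(i)-Let-}.(i) guarantees that the corresponding ID execution terminates and produces a canonical DFA $M^{ID}$ for $L(A)$.

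Finally, combining the analogue of Simulation Theorem~\ref{thm:Let--be}.(i).(c) with Theorem~\ref{thm:(i)-Let-}.(ii), we get $E_n^{IDS}(\alpha) = E_n^{ID}(\alpha)$ for all $\alpha \in T_l^{IDS}$ when $n$ is the greatest integer with $K(n)=l$. By the modified Proposition part~(iii) we have $T_l^{IDS} = T^{ID}$ and $P_l^{\prime IDS} = P^{\prime ID}$, so the state sets, initial state, accepting states, and transition function of $M_l^{IDS}$ coincide with those of $M^{ID}$, giving that $M_l^{IDS}$ is a canonical representation of $A$. The main obstacle is carrying out the analogue of Simulation Theorem~\ref{thm:Let--be}: we must check that the doubly nested induction still goes through despite the fact that the outer loop of prefix closed \emph{IDS} adds a possibly multi-element set $Pref(s_k) \setminus P_{k-1}^{IDS}$ to the name set at each step, so that the sequence of state names processed by \emph{IDS} can be interleaved with a valid processing order of the same names by ID on input $P^{ID}$. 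Apart from this bookkeeping, the induction proceeds exactly as in the prefix free case.
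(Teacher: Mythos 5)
Your proposal is correct and follows essentially the same route as the paper, which explicitly leaves this result as an exercise patterned on the proof of the prefix free Correctness Theorem: you substitute the modified proposition for prefix closed \emph{IDS}, establish the analogous simulation by ID on $Pref(\{\lambda,s_{1},\ldots,s_{l}\})$, and invoke Theorem \ref{thm:(i)-Let-}. Your added observations --- that a superset of a live complete set is live complete and that the only new bookkeeping is the possibly multi-element prefix additions per outer iteration --- are exactly the details the paper's ``exercise'' expects the reader to supply.
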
 \begin{proof} Exercise, following the proof of Theorem \ref{thm:(Correctness-Theorem)-Let}. \end{proof}

\section{Empirical Performance Analysis}\label{exper}
\noindent Little seems to have been published about the empirical performance and average time complexity of incremental learning algorithms for DFA in the literature. By the average time complexity of the algorithm we mean the average number of queries needed to completely learn a DFA of a given state space size. This question can be answered experimentally by randomly generating a large number of DFA with a given state space size, and randomly generating a sequence of query strings for each such DFA.
From the point of view of software engineering applications such as testing and model inference, we have found that it is important to distinguish between the two types of queries about the target automaton that are used by \emph{IDS} during the learning procedure. On the one, hand the algorithm uses internally generated queries (we call these \textit{book-keeping queries}) and on the other hand it uses queries that are supplied externally by the input file (we call these \textit{membership queries}). From a software engineering applications viewpoint it seems important that the ratio of book-keeping to membership queries should be low. This allows membership queries to have the maximum influence in steering the learning process externally. The average query complexity of the \emph{IDS} algorithm with respect to the numbers of book-keeping and membership queries needed for complete learning can also be measured by random generation of DFA and query strings. To measure each query type, Algorithm \ref{IDSalg} has been instrumented with two integer variables \emph{bquery} and \emph{mquery} intended to track the total number of each type of query used during learning (lines \ref{bq1}, \ref{mq1} and \ref{bq2}).
Since two variants of the \emph{IDS} algorithm were identified, with and without prefix closure of input strings, it was interesting to compare the performance of each of these two variants according to the above two average complexity measures.

To empirically measure the average time and query complexity of our two \emph{IDS} algorithms, two experiments were set up. These measured:\\  \indent (1) the average computation time needed to learn a randomly generated DFA (of a given state space size) using randomly generated membership queries, and\\  \indent (2) the total number of membership and book-keeping queries needed to learn a randomly generated DFA (of a given state space size) using randomly generated membership queries.
We chose randomly generated DFA with state space sizes varying between 5 and 50 states, and an equiprobable distribution of transitions between states. No filtering was applied to remove dead states, so the average effective state space size was therefore somewhat smaller than the nominal state space size.

The experimental setup consisted of the following components: \\  \indent {(1) a random input string generator},\\  \indent {(2) a random DFA generator},\\  \indent {(3) an instance of the \emph{IDS} Algorithm (prefix free or prefix closed) },\\  \indent {(4) an automaton equivalence checker}.

The architecture of our evaluation framework and the flow of data between these components are illustrated in Figure \ref{evalFrame}.
The random input string generator constructed strings over the set of alphabet $\Sigma$ of the target automaton and the length of the generated strings was always $\leq\vert Q\vert$ of the target automaton. Since \emph{IDS} begins learning by reading the null string, therefore, null string was only provided externally and wasn't generated randomly to avoid unnecessary repetition. The random DFA generator started by building a specific sized state set $Q$. The number of final states $\vert F\vert\leq\vert Q\vert$ was chosen randomly and then these final states were again marked randomly from the state set $Q$. The initial state was also chosen randomly from the state set. Similarly the transition function $\delta$ was constructed by randomly assigning next states from set $Q$ for each state $q\in Q$ after reading each alphabet $\sigma\in\Sigma$. The \emph{IDS} algorithms and the entire evaluation framework were implemented in Java. The performance of the input string and DFA generators is dependent on Java's Random class which generates pseudorandom numbers that depend upon a specific seed. To minimize the chance of generating the same pseudo random strings/automata again the seed was set to the system clock where ever possible.

The purpose of the equivalence checker was to terminate the learning procedure as soon as the hypothesis automaton sequence had successfully converged to the target automaton. There are several well known equivalence checking algorithms described in literature. These have runtime complexity ranging from quadratic to nearly linear execution times. We chose an algorithm with nearly linear time performance described in \cite{HopandKar}. This was to minimise the overhead of equivalence checking in the overall computation time.

Ten different automata were generated randomly for each state size (ranging between 5 and 50). They were learned by both prefix free and prefix closed \emph{IDS }and their learning times (in milli-seconds), number of book keeping queries and membership queries asked to reach the target were recorded. The graphs in Figure \ref{fig:AvgTime} and Figure \ref{fig:AvgQuery} show a mean of ten experiments for all these values for both variants of \emph{IDS.}
\subsection{Results and Interpretation}
The graphs in Figures \ref{fig:AvgTime} and \ref{fig:AvgQuery} illustrate the outcome of our experiments to measure the average time and average query complexity of both \emph{IDS} algorithms, as described in Section \ref{exper}.

Figure \ref{fig:AvgTime} presents the results of estimating the average learning time for the prefix free and prefix closed \emph{IDS} algorithms as a function of the state space size of the target DFA. For large state space sizes $\vert Q\vert$, the data sets of randomly generated target DFA represent only a small fraction of all possible such DFA of size $\vert Q\vert$. Therefore the two data curves are not smooth for large state space sizes. Nevertheless, there is sufficient data to identify some clear trends. The average learning time for prefix free \emph{IDS} learning is substantially greater than corresponding time for prefix closed \emph{IDS}, and this discrepancy increases with state space size. The reason would appear to be that prefix free \emph{IDS} throws away data about the target DFA that must be regenerated randomly (since input string queries are generated at random). The average time complexity for prefix free \emph{IDS} learning seems to grow approximately quadratically, while the average time complexity for prefix closed \emph{IDS} learning appears to grow almost linearly within the given data range. From this viewpoint, prefix-closed \emph{IDS} appears to be the superior algorithm.

Figure \ref{fig:AvgQuery} presents the results of estimating the average number of membership queries and book-keeping queries as a function of the state space size of the target DFA. Again, we have compared prefix-closed with prefix free \emph{IDS} learning. Allowance must also be made for the small data set sizes for large state space values. We can see that membership queries grow approximately linearly with the increase in state space size, while book-keeping queries grow approximately quadratically, at least within the data ranges that we considered. There appears to be a small but significant decrease in the number of both book-keeping and membership queries used by the prefix-closed \emph{IDS} algorithm. The reason for this appears to be similar to the issues identified for average time complexity. Prefix closure seems to be an efficient way to gather data about the target DFA. From the viewpoint of software engineering applications discussed in Section 1, now prefix free \emph{IDS} appears to be preferable. This is because the decreasing ratio of book-keeping to membership queries improves the possibility to direct the learning process using externally generated queries (e.g. from a model checker).

\section{Conclusions}\label{concl}
We have presented two versions of the \emph{IDS} algorithm which is an incremental algorithm for learning DFA in polynomial time. We have given a rigorous proof that both algorithms correctly learn in the limit. Finally we have presented the results of an empirical study of the average time and query complexity \emph{IDS}. These empirical results suggest that \emph{IDS} algorithm is well suited to applications in software engineering, where an incremental approach that allows externally generated online queries is needed. This conclusion is further supported in \cite{MeinkeSindhu2011} where we have evaluated the \emph{IDS} algorithm for learning based testing of reactive systems, and shown that it leads to error discovery up to 4000 times faster than using non-incremental learning. We gratefully acknowledge financial support for this research from the Higher Education Commission (HEC) of Pakistan, the Swedish Research Council (VR) and the EU under project HATS FP7-231620.

\bibliographystyle{IEEEtran}
\bibliography{bibfile}

\end{document}